\documentclass[final,12pt]{alt2021} 

\usepackage{microtype}
\usepackage{graphicx}
\usepackage{booktabs} 

\usepackage{hyperref}

\usepackage{footmisc}

\usepackage[]{natbib}

\usepackage{url}            
\usepackage{booktabs}       
\usepackage{amsfonts}       
\usepackage{nicefrac}       
\usepackage{microtype}      
\usepackage{amsmath}
\usepackage{mathtools}

\usepackage{enumitem}
\usepackage{dirtytalk}
\usepackage{tikz}
\usetikzlibrary{arrows.meta,calc}
\usepackage{dsfont}
\usepackage{amssymb}
\usepackage{upgreek}
\usepackage{multirow}
\usepackage{soul}
\usepackage{framed}
\usepackage{pifont}
\usepackage{xspace}
\usepackage{float}
\usepackage{empheq}

\usepackage{multicol}
\usepackage{textgreek}
\makeatletter
\newcommand{\greekalpha}[1]{\c@greekalpha{#1}}
\newcommand{\c@greekalpha}[1]{%
  {%
    \ifcase\number\value{#1} %
    \or
    \textalpha
    \or
    \textbeta
    \or
    \textgamma
    \fi
  }%
}

\AddEnumerateCounter*{\greekalpha}{\c@greekalpha}{5}

\newcommand{\CommaBin}{\mathbin{\raisebox{0.5ex}{,}}}
\let\originalleft\left
\let\originalright\right
\renewcommand{\left}{\mathopen{}\mathclose\bgroup\originalleft}
\renewcommand{\right}{\aftergroup\egroup\originalright}
\newcommand{\wt}[1]{\widetilde{#1}}

\newcommand{\wh}[1]{\widehat{#1}}

\newcommand{\cS}{\mathcal{S}}
\newcommand{\cA}{\mathcal{A}}

\newcommand{\SampCompOne}{\textsc{SampComp}(\epsilon)}
\newcommand{\SampCompTwo}{\textsc{SampComp}(\epsilon, \theta)}

\newcommand{\EVI}{{\small\textsc{EVI}}\xspace}
\newcommand{\VI}{{\small\textsc{VI}}\xspace}

\newcommand{\UCBVI}{{\small\textsc{UCBVI}}\xspace}

\newcommand\myineeqa{\mathrel{\stackrel{\makebox[0pt]{\mbox{\normalfont\tiny (a)}}}{\leq}}}
\newcommand\myineeqb{\mathrel{\stackrel{\makebox[0pt]{\mbox{\normalfont\tiny (b)}}}{\leq}}}
\newcommand\myineeqc{\mathrel{\stackrel{\makebox[0pt]{\mbox{\normalfont\tiny (c)}}}{\leq}}}
\newcommand\myineeqd{\mathrel{\stackrel{\makebox[0pt]{\mbox{\normalfont\tiny (d)}}}{\leq}}}

\newcommand\ineqcs{\mathrel{\stackrel{\makebox[0pt]{\mbox{\normalfont\tiny (C-S)}}}{\leq}}}

\makeatletter
\newcommand\footnoteref[1]{\protected@xdef\@thefnmark{\ref{#1}}\@footnotemark}
\makeatother

\DeclarePairedDelimiter\abs{\lvert}{\rvert}%
\DeclarePairedDelimiter\norm{\lVert}{\rVert}%

\newcommand{\gammaVI}{\mu_{\textsc{{\tiny VI}}}\xspace}

\renewcommand{\epsilon}{\varepsilon}

\renewcommand{\tilde}{\widetilde}

\makeatletter
\renewcommand*{\@fnsymbol}[1]{\ensuremath{\ifcase#1\or \dagger \or \mathsection \or *\or \dagger\or \ddagger\or
    \mathsection\or \mathparagraph\or \|\or **\or \dagger\dagger
    \or \ddagger\ddagger \else\@ctrerr\fi}}
\makeatother

\title[Sample Complexity Bounds for Stochastic Shortest Path with a Generative Model]{Sample Complexity Bounds for Stochastic Shortest Path \\ with a Generative Model}
\usepackage{times}

\altauthor{%
 \Name{Jean Tarbouriech} \Email{jean.tarbouriech@gmail.com}\\
 \addr Facebook AI Research Paris \& Inria Lille
 \AND
 \Name{Matteo Pirotta} \Email{pirotta@fb.com}\\
 \addr Facebook AI Research Paris%
 \AND
 \Name{Michal Valko} \Email{valkom@deepmind.com}\\
 \addr DeepMind Paris%
 \AND
 \Name{Alessandro Lazaric} \Email{lazaric@fb.com}\\
 \addr Facebook AI Research Paris%
}

\begin{document}

\maketitle

\begin{abstract}%
We consider the objective of computing an $\epsilon$-optimal policy in a stochastic shortest path (SSP) setting, provided that we can access a generative sampling oracle. We  propose two algorithms for this setting and derive PAC bounds on their sample complexity: one for the case of positive costs and the other for the case of non-negative costs under a restricted optimality criterion. While tight sample complexity bounds have been derived for the finite-horizon and discounted MDPs, the SSP problem is a strict generalization of these settings and it poses additional technical challenges due to the fact that no specific time horizon is prescribed and policies may never terminate, i.e., we are possibly facing non-proper policies. As a consequence, we can neither directly apply existing techniques minimizing sample complexity nor rely on a regret-to-PAC conversion leveraging recent regret bounds for SSP. Our analysis instead combines SSP-specific tools and variance reduction techniques to obtain the first sample complexity bounds for this setting.
\end{abstract}

\begin{keywords} sample complexity, stochastic shortest path, Markov decision process
\end{keywords}

\section{Introduction}

A common assumption in approximate dynamic programming and reinforcement learning (RL) is to have access to a generative model of the Markov decision process (MDP), that is, a sampling device which can generate samples of the transition and reward functions at any state-action pair. A large body of prior work~\citep{azar2013minimax, wang2017randomized, sidford2018near, sidford2018variance, zanette2019almost, agarwal2019model, li2020breaking} studied how to compute an $\epsilon$-optimal policy in the infinite-horizon discounted MDP (DMDP) setting with as few calls to the generative model as possible. While the infinite-horizon discounted setting is common in practical RL, many problems are better formalized within the strictly more general\footnote{\label{footnote1}Any DMDP with discount factor $\gamma$ can indeed be converted into an SSP problem with the same state space augmented by an artificial termination state that is reached with probability $1-\gamma$ at any time step and state-action pair.} stochastic shortest-path (SSP) setting~\citep{bertsekas1995dynamic}, where the objective is to compute a policy that minimizes the cost accumulated before reaching a specific goal state. Recently, \citet{tarbouriech2019no} and \citet{cohen2020near} studied the SSP problem in the online case and derived the first regret bounds for this setting. 

In this paper we focus on the generative model setting and study the problem of computing a near-optimal policy in an SSP problem with a given goal state and cost function. We derive two closely related algorithms for this setting and we prove PAC bounds for their sample complexity. The first algorithm is designed to return an $\epsilon$-optimal policy for any SSP problem with strictly positive cost function and it has a sample complexity that is adaptive to the (unknown) range of the optimal value function. The second can be used for any cost function, including the case when the cost is zero in some states, for which the optimal policy may not even be proper (i.e., it may never reach the goal yet still minimize the cumulative cost). In this case, we re-frame the objective as computing an $\epsilon$-optimal solution in the set of (proper) policies that in expectation reach the goal in at most a number of steps that is proportional to the minimum number of expected steps to the goal (i.e., the SSP-diameter of the problem). The main technical challenge in deriving these results is due to the fact that in SSP we have no knowledge of the \emph{effective} horizon of the problem, unlike in DMDP (with $1/(1-\gamma)$). As a result, model estimation errors may accumulate indefinitely, thus preventing from achieving any desired level of accuracy. In order to deal with this problem, a first approach is to build on an SSP-specific simulation lemma, which reveals the level of accuracy needed in estimating the MDP to be able to recover proper policies and the role played by the minimum cost. Although this approach yields a bound on the sample complexity, we show that it can in fact be tightened by leveraging and combining SSP-specific tools for regret minimization~\citep{cohen2020near} and variance-aware techniques for DMDP sample complexity~\citep{azar2013minimax}.

\section{Preliminaries}
\label{sec_preliminaries}

\paragraph{Stochastic shortest path (SSP)}
We start by introducing the notion of MDP with an SSP objective
as done by \citet[][Sect.\,3]{bertsekas1995dynamic}.
\begin{definition}[SSP-MDP]\label{def:ssp.mdp.app}
    An SSP-MDP is an MDP $$M := \langle \mathcal{S}, \mathcal{A}, g, p, c \rangle,$$ where $\mathcal{S}$ is the state space with $ S := |\mathcal{S}|$ states and $\mathcal{A}$ is the action space with $A := |\mathcal{A}|$ actions. We denote by $g \notin \mathcal{S}$ the goal state, and we set $\cS' := \cS \cup \{g\}$. Taking action $a$ in state $s$ incurs a cost of $c(s,a) \in [0,1]$ and the next state $s' \in \cS'$ is selected with probability $p(s' \vert s, a)$. 
    The goal state $g$ is absorbing and zero-cost, i.e., $p(g \vert g,a) = 1$ and $c(g,a)=0$ for any action $a \in \cA$, which effectively implies that the agent ends its interaction with $M$ when reaching the goal $g$.
\end{definition}
We denote by $\Pi := \{\pi : \mathcal{S} \to \mathcal{A}\}$ the set of stationary deterministic policies. For any $\pi \in \Pi$ and $s \in \mathcal{S}$, the random (possibly unbounded) goal-reaching time starting from $s$ is denoted by $\tau_{\pi}(s) := \inf \{ t \geq 0: s_{t+1} = g \,\vert\, s_1 = s, \pi \}$. 
\begin{definition}[Proper policy]\label{def:proper}
    A policy $\pi$ is \textit{proper} if its execution reaches the goal with probability~$1$ when starting from any state in $\cS$. A policy is \textit{improper} if it is not proper. The set of proper policies is denoted by $\Pi_{\textrm{proper}}$.
\end{definition}
\begin{assumption}
There exists at least one proper policy, i.e., $\Pi_{\textrm{proper}} \neq \emptyset$.
\label{asm_proper}
\end{assumption}
The value function (also called expected cost-to-go) of a policy $\pi \in \Pi$ is defined as
\begin{align*}
    V^{\pi}(s) &:= \mathbb{E}\bigg[ \sum_{t = 1}^{+\infty} c(s_{t}, \pi(s_t)) \,\Big\vert\,s_1 = s\bigg] = \mathbb{E}\bigg[ \sum_{t = 1}^{\tau_{\pi}(s)} c(s_{t}, \pi(s_t)) \,\Big\vert\,s_1 = s\bigg],
\end{align*}
where the expectation is w.r.t.\,the random sequence of states generated by executing $\pi$ starting from state $s \in \mathcal{S}$. Note that for improper policies $\pi \not\in \Pi_{\textrm{proper}},$ $V^{\pi}$ has at least one unbounded component. Our objective is to find an optimal policy $\pi^{\star}$ that minimizes the value function. For any vector $V \in \mathbb{R}^S$, the optimal Bellman operator is defined as
\begin{align}\label{bellman_eq}
\mathcal{L}V(s) := \min_{a \in \mathcal{A}} \Big\{ c(s, a) + \sum_{y \in \mathcal{S}} p(y  \, \vert \,  s,a) V(y) \Big\}. 
\end{align}
\begin{lemma}[\citealp{bertsekas1991analysis}, Prop.\,2]
Suppose that Asm.\,\ref{asm_proper} holds and that for every improper policy $\pi'$ there exists at least one state $s \in \cS$ such that $V^{\pi'}(s) = + \infty$. Then the optimal policy $\pi^{\star}$ is stationary, deterministic, and proper. Moreover, $V^\star = V^{\pi^\star}$ is the unique solution of the optimality equations $V^\star = \mathcal{L} V^\star$ and $V^\star(s) < +\infty$ for any $s \in \mathcal{S}$.
\label{lemma_wellposedproblem}
\end{lemma}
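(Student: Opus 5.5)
The plan follows the classical argument of Bertsekas and Tsitsiklis, built on the policy evaluation operator $\mathcal{T}_\pi V(s) := c(s,\pi(s)) + \sum_{y\in\cS} p(y\vert s,\pi(s))V(y)$ and on $\mathcal{L}$. Write $P_\pi$ for the substochastic $S\times S$ matrix restricted to $\cS$. We first record the standard facts about a proper $\pi$. Properness means $P_\pi^t \to 0$, so $\sum_t P_\pi^t = (I-P_\pi)^{-1}$ exists. Consequently $V^\pi = (I-P_\pi)^{-1}c_\pi$ is finite and is the unique fixed point of $\mathcal{T}_\pi$, and $\mathcal{T}_\pi^k V \to V^\pi$ for every $V\in\mathbb{R}^S$.

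The key converse concerns an improper $\pi$. If $V \ge \mathcal{T}_\pi V$ with $V$ finite, iterating gives
\[
V \ge \mathcal{T}_\pi^k V = \sum_{t<k} P_\pi^t c_\pi + P_\pi^k V .
\]
The term $P_\pi^k V$ is bounded below by $-\|V\|_\infty$. Hence $\sum_{t<k}P_\pi^t c_\pi \le 2\|V\|_\infty$ for all $k$, which contradicts the hypothesis that $V^\pi(s)=+\infty$ for some $s$. So any finite $V$ with $\mathcal{T}_\pi V \le V$ forces $\pi$ to be proper.

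\emph{Existence.} Run policy iteration from the proper $\pi_0$ given by Asm.~\ref{asm_proper}. Let $\pi_{k+1}$ be greedy with respect to $V^{\pi_k}$, so that
\[
\mathcal{T}_{\pi_{k+1}}V^{\pi_k} = \mathcal{L}V^{\pi_k} \le \mathcal{T}_{\pi_k}V^{\pi_k} = V^{\pi_k}.
\]
By the converse, $\pi_{k+1}$ is proper. Monotonicity of $\mathcal{T}_{\pi_{k+1}}$ then gives
\[
V^{\pi_{k+1}} = \lim_m \mathcal{T}^m_{\pi_{k+1}}V^{\pi_k} \le V^{\pi_k}.
\]
There are finitely many deterministic stationary policies, so the iteration stops at some proper $\pi^\star$ with $V^{\pi^\star} = \mathcal{L}V^{\pi^\star}$. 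This produces a finite fixed point $V^\star := V^{\pi^\star}$ of $\mathcal{L}$.

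\emph{Uniqueness.} Let $V$ be any finite fixed point of $\mathcal{L}$ and $\pi$ its greedy policy. Then $\mathcal{T}_\pi V = V$, so $\pi$ is proper by the converse, and $V = V^\pi$. For any proper $\pi'$ we have $V = \mathcal{L}V \le \mathcal{T}_{\pi'}V$; iterating and passing to the limit gives $V \le V^{\pi'}$. Applying this both ways to two finite fixed points yields equality. Hence $V^\star$ is the unique solution and is finite.

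\emph{Optimality among all policies.} This is where the main obstacle lies, since we must cover arbitrary, possibly history-dependent randomized policies $\mu$ and improper ones. Fix such a $\mu$ and use $V^\star = \mathcal{L}V^\star$. At each step, any action satisfies $c(s,a) + \sum_y p(y\vert s,a)V^\star(y) \ge V^\star(s)$. Taking expectations along the trajectory of $\mu$ gives
\[
\mathbb{E}^\mu\Big[\sum_{t<k}c_t\Big] + \mathbb{E}^\mu\big[V^\star(s_k)\big] \ge V^\star(s).
\]
The second term lies in $\big[\min(0,\min V^\star),\,\max V^\star\big]$ and, since $V^\star(g)=0$, is bounded in absolute value by $\|V^\star\|_\infty$. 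Take $\liminf_k$. If $V^\mu(s)$ is finite, the residual term must be controlled. Two cases arise. If costs are such that $V^\star \ge 0$, which holds since $c\ge 0$, then $\mathbb{E}[V^\star(s_k)] \ge 0$ is dropped in the wrong direction, so instead we proceed as follows. Write $V^\mu(s) \ge \lim_k \mathbb{E}[\sum_{t<k}c_t]$, and compare with value iteration $\mathcal{L}^k 0$. We have $V^\mu \ge \lim_k \mathcal{L}^k 0$. It remains to show $\mathcal{L}^k 0 \to V^\star$. Monotonicity gives $\mathcal{L}^k 0 \le \mathcal{L}^k V^\star = V^\star$. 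For the reverse direction, the sequence $\mathcal{L}^k 0$ is increasing, so it has a limit $W \le V^\star$ that is a finite fixed point of $\mathcal{L}$, by finiteness of actions and continuity. Uniqueness then forces $W = V^\star$.

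This value-iteration-from-zero step is the delicate part, and I expect it to be the crux. Justifying that $W$ is a fixed point requires interchanging the limit with the minimum, which is fine since $\cA$ is finite. It also relies on $c\ge 0$ to get monotone increase. Together these give $V^\mu \ge V^\star$ for every policy, so the stationary, deterministic, proper $\pi^\star$ is optimal.
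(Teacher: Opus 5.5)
Your proof is correct. The paper gives no argument of its own for this lemma; it imports Prop.~2 of \citet{bertsekas1991analysis}. Your skeleton coincides with theirs:
\begin{itemize}
\item the converse ``$\mathcal{T}_\pi V \le V$ with $V$ finite forces $\pi$ to be proper'' (this is Lem.~\ref{lemma_technical_lemma_ssp});
\item policy iteration started from a proper policy, to produce a finite fixed point of $\mathcal{L}$;
\item uniqueness, by comparing two fixed points through their greedy (hence proper) policies.
\end{itemize}

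The genuine difference is the last step, showing $V^\mu \ge V^\star$ for an arbitrary history-dependent, randomized, possibly improper $\mu$. Bertsekas and Tsitsiklis allow costs of either sign, so $\mathcal{L}^k 0$ need not be monotone. They first prove that value iteration converges to $V^\star$ from \emph{every} initial vector, by trapping it between two sequences:
\begin{itemize}
\item $\mathcal{L}^k(V^\star - \delta\mathbf{1})$, which increases to $V^\star$;
\item $\mathcal{L}^k \hat V$, which decreases to $V^\star$, where $\hat V$ is the value of $\pi^\star$ with all costs raised by $\delta$.
\end{itemize}
Only then do they lower bound $V^\mu$ by $\lim_k \mathcal{L}^k 0$, as you do.

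You use $c \ge 0$ instead. The sequence $\mathcal{L}^k 0$ is nondecreasing and bounded by $\mathcal{L}^k V^\star = V^\star$. Its limit is a finite fixed point by continuity of $\mathcal{L}$ (finite $\cA$), and your uniqueness step identifies it with $V^\star$. Your route is shorter and fully sufficient under the paper's standing assumption $c\in[0,1]$. Theirs buys convergence of value iteration from arbitrary initializations and covers signed costs. That is exactly the generality for which the hypothesis ``every improper policy has an infinite cost component'' (rather than $c_{\min}>0$) is designed.

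Three things to tidy:
\begin{enumerate}
\item Policy-iteration termination needs one more line than ``finitely many policies''. Record the chain $V^{\pi_k} \ge \mathcal{L}V^{\pi_k} = \mathcal{T}_{\pi_{k+1}}V^{\pi_k} \ge V^{\pi_{k+1}}$. The last inequality holds because $\mathcal{T}^m_{\pi_{k+1}}V^{\pi_k}$ is nonincreasing in $m$. Then any repetition $\pi_i=\pi_j$ with $i<j$ forces equality throughout the chain, hence $\mathcal{L}V^{\pi_i} = V^{\pi_i}$.
\item The inequality $\mathbb{E}^\mu[\sum_{t<k}c_t] \ge (\mathcal{L}^k 0)(s)$ for history-dependent randomized $\mu$ should be justified explicitly, by backward induction on the $k$-step problem with zero terminal cost.
\item The aborted ``two cases'' passage about $\mathbb{E}^\mu[V^\star(s_k)]$ should be deleted; the value-iteration argument makes it unnecessary.
\end{enumerate}
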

We define the following quantities:
\begin{itemize}[leftmargin=0.2in]
    \item The \textit{SSP-diameter} $D$ \citep{tarbouriech2019no} is defined as
\begin{align}\label{definition_SSP_diameter}
    D := \max_{s \in \mathcal{S}} D_s, \quad~ \textrm{with}~~ D_s := \min_{\pi \in \Pi} \mathbb{E}\left[\tau_{\pi}(s)\right].
\end{align}
    \item $B_{\star} := \max_{s \in \cS} \min_{\pi \in \Pi_{\textrm{proper}}} V^{\pi}(s)$ is the maximal optimal value function over states.
    \item $\Gamma := \max_{s,a} \norm{p(\cdot \vert s,a)}_0 \leq S+1$ is the maximal support of $p(\cdot \vert s,a)$. 
    \item Finally, $c_{\min} := \min_{(s,a) \in \cS \times \cA} c(s,a) \in [0,1]$ is the minimum non-goal cost.
\end{itemize}
Note that since the number of states $S$ is finite, Asm.\,\ref{asm_proper} implies that $B_{\star} \leq D<+\infty$.

\paragraph{Problem formulation}
We consider that the costs $c$ are deterministic and known to the learner, while the transition dynamics $p$ is unknown. We assume access to a generative model, which for any state-action pair $(s,a)$ returns a sample drawn from $p(\cdot|s,a)$. We ask the following: \textit{How many calls to the generative model are sufficient to compute a near-optimal policy with high probability?}
\paragraph{On the online-to-batch conversion in the SSP setting} Since the problem of learning in SSP has already been studied in the regret-minimization setting~\citep{tarbouriech2019no, cohen2020near}, it may be tempting to leverage a regret-to-PAC conversion to obtain a sample complexity bound and provide a first answer to the question above. For instance, in finite-horizon MDPs, a regret bound can be converted to a PAC guarantee by selecting as a candidate optimal solution any policy chosen at random out of all episodes~\citep{Jin2018qlearning}. Unfortunately this procedure cannot be applied here. In fact, the SSP-regret differs from the finite-horizon regret, since at each episode it compares the \textit{empirical} costs accumulated along one trajectory with the optimal value function. Indeed, following $K$ episodes with initial state $s_0$ where for each $k\in [K]$ the (possibly non-stationary) policy executed is denoted by $\pi_k$, we recall that the SSP-regret is defined as
\begin{align*}
    \Big[ \sum_{k=1}^K \sum_{h=1}^{\tau_{\pi_k}(s_0)} c(s_{k,h}, \pi_k(s_{k,h})) \Big] - K \cdot \min_{\pi \in \Pi_{\textrm{proper}}} V^{\pi}(s_0),
\end{align*}
where $s_{k,h}$ denotes the $h$-th state visited during episode $k$ and $\tau_{\pi_k}(s_0)$ is the (possibly infinite) time it takes the agent to complete episode $k$. As a result, no guarantee is provided for the value function (i.e., the expected cumulative costs) of one episode. Indeed, it has been shown in the existing regret analyses for SSP that explicitly guaranteeing the properness of the deployed policies is \textit{not} an intermediate step that is required to derive the regret bounds.  SSP-regret algorithms may change multiple policies within each episode and none of them may actually be proper (i.e., they may have an unbounded value function, so that there exists a state $s$ such that $V^\pi(s) = +\infty$). As such, it is unclear which policy should be retained as a solution candidate. Finally, the near-optimal guarantees we intend to achieve are for any arbitrary initial state in $\mathcal{S}$, while regret-style guarantees are only in expectation with respect to a starting state distribution. 

\paragraph{Distinction of cases depending on $c_{\min}$} Our analysis considers two distinct cases, $c_{\min} > 0$ and $c_{\min} = 0$, for which it targets different notions of sample complexity.

\paragraph{First case: sample complexity objective for $c_{\min} > 0$}
When the cost function is strictly positive, the conditions of Lem.\,\ref{lemma_wellposedproblem} hold, so the optimal policy is guaranteed to be proper. We seek to achieve the following standard PAC guarantees.
\begin{definition}\label{def_eps_opt}
    We say that an algorithm is $(\epsilon,\delta)$-optimal with sample complexity $n$, if after $n$ calls to the generative model it returns a policy $\pi$ that verifies $\norm{ V^{\pi} - V^{\star}}_{\infty} \leq \epsilon$ with probability at least~$1-\delta$. We denote by $\SampCompOne$ the corresponding sample complexity $n$.
\end{definition}

\paragraph{Second case: sample complexity objective for $c_{\min} = 0$}
The case of zero minimum cost is a complex SSP problem where the optimal policy may not even be guaranteed to be proper \citep{bertsekas1995dynamic}. In this case, it is unclear whether the sample complexity of Def.~\ref{def_eps_opt} may even be bounded, since estimation errors may propagate indefinitely. On the other hand, we seek $\epsilon$-optimality guarantees w.r.t.\,a set of proper policies.  

\begin{definition}[Restricted set $\Pi_{\theta}$]\label{def_restriction}
    For any $\theta \in [1, + \infty]$, we define the set $$\Pi_{\theta} := \{ \pi \in \Pi : \forall s \in \cS, \mathbb{E}\left[ \tau_{\pi}(s) \right] \leq \theta D_{s} \}.$$
\end{definition}
Notice that $\Pi_{+ \infty} = \Pi$. Moreover, for any $\theta \in [1, + \infty)$, $\Pi_{\theta}$ only contains proper policies, i.e., $\Pi_{\theta} \subseteq \Pi_{\textrm{proper}}$. Similar to Def.\,\ref{def_eps_opt}, we then reformulate the desired notion of optimality.

\begin{definition}\label{def_eps_opt.theta}
We say that an algorithm is $(\epsilon,\delta,\theta)$-optimal with sample complexity $n$, if after $n$ calls to the generative model it returns a policy $\pi$ that verifies $\norm{ V^{\pi} - V_\theta^{\star}}_{\infty} \leq \epsilon$ with probability at least~$1-\delta$, where $V_\theta^{\star} = \min_{\pi \in \Pi_{\theta}} V^{\pi}$ is the optimal value function restricted to policies in $\Pi_{\theta}$. We denote by $\SampCompTwo$ the corresponding sample complexity $n$.
\end{definition}
While alternative definitions of restricted set may be introduced, we believe that Def.\,\ref{def_restriction} is well-suited for our problem, as it defines the restriction w.r.t.\,$D_s$, a cost-independent quantity describing the difficulty of navigating in the SSP-MDP (Eq.\,\ref{definition_SSP_diameter}). Nonetheless, this poses an additional layer of complexity, since $D_s$ is unknown to the agent, which only receives $\theta$ as additional parameter.

\section{A first approach: Simulation Lemma for SSP}

We begin by stating a general simulation lemma tailored to SSP which is a useful component to derive sample complexity bounds. For any model $p$ and any $\eta > 0$, we introduce the set of models close to $p$ as follows $$\mathcal{P}_{\eta}^{(p)} := \left\{ p' \in \mathbb{R}^{S' \times A \times S'} : ~ \forall (s,a) \in \cS' \times \cA, ~p'(\cdot \vert s,a) \in \Delta(\cS'), ~ \norm{p(\cdot \vert s,a) - p'(\cdot \vert s,a)}_1 \leq \eta \right\}.$$
Up to a slight difference in the way the set $\mathcal{P}_{\eta}^{(p)}$ is defined, leveraging the result of \citet[][App.\,C]{tarbouriech2020improved} yields the following guarantee (see App.\,\ref{app_useful_results}). 

\begin{lemma}[Simulation Lemma for SSP]\label{lemma_simulation_ssp}
    Consider any $\eta > 0$ and any two models $p$ and $p' \in \mathcal{P}_\eta^{(p)}$ such that, for each model, there exists at least one proper policy w.r.t.\,the goal state $g$. Consider a cost function such that $c_{\min} > 0$. Consider any policy $\pi$ that is proper in $p'$, with value function denoted by $V_{\pi}'$, such that the following condition is verified
    \begin{align}
        \eta \norm{V_{\pi}'}_{\infty} \leq 2 c_{\min}.
    \label{eq_key_sim_lemma_ssp}
    \end{align}
    Then $\pi$ is proper in $p$ (i.e., its value function verifies $V_{\pi} < + \infty$ component-wise), and we have
    \begin{align*}
        \forall s \in \cS, ~ V_{\pi}(s ) \leq \left( 1 +  \frac{2\eta \norm{V'_{\pi}}_{\infty}}{c_{\min}} \right) V'_{\pi}(s ),
    \end{align*}
    and conversely,
    \begin{align*}
    \forall s \in \cS, ~ V'_{\pi}(s ) \leq \left(1 + \frac{\eta \norm{V'_{\pi}}_{\infty}}{c_{\min}} \right) V_{\pi}(s ).
    \end{align*}
    Combining the two inequalities above yields
    \begin{align*}
        \norm{ V_{\pi} - V'_{\pi}}_{\infty} \leq \frac{7 \eta \norm{V'_{\pi}}_{\infty}^2}{c_{\min}}\cdot
    \end{align*}
\end{lemma}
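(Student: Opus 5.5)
The plan is to compare the two Bellman evaluation equations for $\pi$ under $p$ and $p'$ and to turn the $\ell_1$ perturbation of the transitions into a multiplicative perturbation of the cost. This uses the fact that every non-goal cost is at least $c_{\min}>0$.

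\textbf{Step 1 (properness and the first inequality).} Write $P_\pi, P'_\pi$ for the substochastic transition matrices restricted to $\cS$ (goal removed), so that $V'_\pi = c_\pi + P'_\pi V'_\pi$. Evaluating under $p$ gives
\begin{align*}
c_\pi + P_\pi V'_\pi = V'_\pi + (P_\pi - P'_\pi)V'_\pi \le V'_\pi + \eta\norm{V'_\pi}_\infty \mathbf{1}.
\end{align*}
Since $V'_\pi\ge 0$ and $V'_\pi(g)=0$, the difference $(P_\pi-P'_\pi)V'_\pi$ is at most $\eta\norm{V'_\pi}_\infty$ in each component. Because $c_\pi \ge c_{\min}\mathbf 1$, we have $\mathbf{1}\le c_\pi/c_{\min}$, hence
\begin{align*}
P_\pi V'_\pi \le V'_\pi - \Big(1-\tfrac{\eta\norm{V'_\pi}_\infty}{c_{\min}}\Big)c_\pi .
\end{align*}
When $\beta:=1-\eta\norm{V'_\pi}_\infty/c_{\min}>0$, $V'_\pi$ is a Lyapunov function for $P_\pi$. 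Iterating $T$ steps gives $\beta\sum_{t<T}P_\pi^t c_\pi \le V'_\pi$, and letting $T\to\infty$ yields $V_\pi \le V'_\pi/\beta <\infty$. So $\pi$ is proper in $p$, with $V_\pi\le (1-x)^{-1}V'_\pi$ where $x=\eta\norm{V'_\pi}_\infty/c_{\min}$.

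The main obstacle is matching the stated constant $2$ in \eqref{eq_key_sim_lemma_ssp} and the factor $1+2x$. The naive argument needs $x<1$, and $1/(1-x)\le 1+2x$ only holds for $x\le1/2$. So the $\ell_1$ bound must be sharpened to $\frac{\eta}{2}\norm{V'_\pi}_\infty$. Both rows are probability vectors on $\cS'$ and $V'_\pi$ takes values in $[0,\norm{V'_\pi}_\infty]$, so $|(p-p')^\top V'_\pi|\le \frac{\eta}{2}\,\mathrm{span}(V'_\pi)\le\frac{\eta}{2}\norm{V'_\pi}_\infty$. This replaces $x$ by $x/2\le 1$ under \eqref{eq_key_sim_lemma_ssp}. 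Properness then needs a strict contraction; the boundary case $x/2=1$ may require care, and I would follow \citet[][App.\,C]{tarbouriech2020improved} there. Otherwise $1/(1-x/2)\le 1+x\le 1+2x$ once $x/2\le1/2$. This is a case split I expect to need to tidy. As a fallback, when $x/2>1/2$ the claimed bound $V_\pi \le (1+2x)V'_\pi$ has factor at least $3$, and I would try to recover it directly.

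\textbf{Step 2 (converse).} Now $V_\pi$ is finite, so the symmetric computation works with the roles of $p$ and $p'$ swapped:
\begin{align*}
V'_\pi = c_\pi + P'_\pi V'_\pi = c_\pi + P_\pi V'_\pi + (P'_\pi-P_\pi)V'_\pi \le c_\pi\Big(1+\tfrac{\eta\norm{V'_\pi}_\infty}{c_{\min}}\Big) + P_\pi V'_\pi .
\end{align*}
Unrolling with $P_\pi$, whose powers vanish because $\pi$ is proper in $p$, gives $V'_\pi \le (1+x)V_\pi$. Here only $\norm{V'_\pi}_\infty$ appears, as stated, and no smallness condition is needed.

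\textbf{Step 3 (combine).} From Step 1, $V_\pi - V'_\pi \le 2x V'_\pi \le 2x\norm{V'_\pi}_\infty$. From Step 2, $V'_\pi - V_\pi \le x V_\pi \le x(1+2x)\norm{V'_\pi}_\infty \le 5x\norm{V'_\pi}_\infty$, using $x\le 2$. Hence $\norm{V_\pi-V'_\pi}_\infty \le 7\eta\norm{V'_\pi}_\infty^2/c_{\min}$, and the constant $7$ comfortably absorbs the slack.
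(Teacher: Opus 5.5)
There is a genuine gap in Step~1, exactly where you flagged it, and it cannot be closed. Put $x:=\eta\norm{V'_\pi}_\infty/c_{\min}$; the hypothesis only gives $x\le 2$. Your span bound yields $V_\pi\le(1-x/2)^{-1}V'_\pi$ for $x<2$. But $(1-x/2)^{-1}\le 1+2x$ holds only for $x\le 3/2$, and you only checked $x\le 1$. For $x\in(3/2,2]$ you offer only a ``fallback''.

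No argument can fill this range. Take a single non-goal state $s$ with unit costs. Let action $b$ satisfy $p(g\vert s,b)=p'(g\vert s,b)=1$. Let action $a$ satisfy $p'(g\vert s,a)=q$, $p'(s\vert s,a)=1-q$, and $p(s\vert s,a)=1$. Both models have a proper policy, $\eta:=2q$ is admissible, and $\pi(s)=a$ has $V'_\pi(s)=1/q$. Hence $\eta\norm{V'_\pi}_\infty=2=2c_{\min}$, yet $\pi$ is improper in $p$. If instead $p(g\vert s,a)=r\in(0,q/4)$, then $x=2(1-r/q)\in(3/2,2)$ and $V_\pi(s)=1/r>(1+2x)V'_\pi(s)$. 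So the lemma is false as literally stated, and your unease about the constant $2$ was well founded.

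The paper's proof has the same defect, but hidden. It uses $\frac{1}{1-x}\le 1+2x$ ``for $0\le x\le\frac12$'', so it really assumes $\eta\norm{V'_\pi}_\infty\le c_{\min}/2$. That assumption is met where the lemma is applied in the proof of Theorem~\ref{theorem_cmin}: there $\eta=c_{\min}/(6\Delta)$ and $\eta\norm{\wt V}_\infty\le c_{\min}/3$. The paper then gets the constant $7$ by using $x\le 2$ in $1+2x\le 5$.

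Under the intended hypothesis $x\le\frac12$, your proof is complete and essentially the paper's:
\begin{itemize}
\item Step~1 is the rescaling $V''=(1-x)^{-1}V'$ followed by Lemma~\ref{lemma_technical_lemma_ssp}; you unroll the inequality directly instead of citing it.
\item Step~2 is identical.
\item Step~3 is the same combination, and in this regime it even gives the constant $4$.
\end{itemize}
Your halved-$\ell_1$ (span) bound is correct but not needed there.
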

For comparison let us now recall the classical simulation lemma for discounted MDPs.
\begin{lemma}[Simulation Lemma for DMDP, see e.g., \citealp{kearns2002near}]\label{lemma_simulation_finite_horizon}%
Consider any two models $p$ and $p' \in \mathcal{P}_{\eta}^{(p)}$ for any $\eta > 0$. Consider as value function in $p$ the expected discounted cumulative reward, i.e., for any policy $\pi$ and state $s \in \cS$, $V_{\pi}(s) := \mathbb{E}\left[\sum_{t=1}^{+ \infty} \gamma^t r(s_t, \pi(s_t)) ~\vert~s_1 = s   \right]$; and $V'_{\pi}$ is the value function in $p'$. Suppose that the instantaneous rewards are known and bounded in $[0,1]$. Then for any policy $\pi$, we have
    \begin{align*}
        \norm{ V_{\pi} - V'_{\pi}}_{\infty} \leq O\left( \frac{\eta}{(1-\gamma)^2} \right)\cdot
    \end{align*}
\end{lemma}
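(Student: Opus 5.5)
The statement immediately above is the classical simulation lemma for discounted MDPs (Lem.~\ref{lemma_simulation_finite_horizon}). I would prove it with the standard telescoping (resolvent) identity between the two Bellman evaluation equations. Write $P_\pi$ and $P'_\pi$ for the state-transition matrices induced by $\pi$ under $p$ and $p'$, and $r_\pi$ for the reward vector. The value functions satisfy $V_\pi = \gamma r_\pi + \gamma P_\pi V_\pi$ and $V'_\pi = \gamma r_\pi + \gamma P'_\pi V'_\pi$, with the paper's convention that the sum starts at $\gamma^1$. The constant factor $\gamma$ is harmless either way. Subtracting the two equations and adding and subtracting $\gamma P_\pi V'_\pi$ gives
\begin{align*}
V_\pi - V'_\pi = \gamma P_\pi (V_\pi - V'_\pi) + \gamma (P_\pi - P'_\pi) V'_\pi,
\end{align*}
and therefore
\begin{align*}
V_\pi - V'_\pi = \gamma (I - \gamma P_\pi)^{-1} (P_\pi - P'_\pi) V'_\pi .
\end{align*}
The inverse exists because $\gamma P_\pi$ has spectral radius at most $\gamma<1$. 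This is the Neumann series $\sum_{t\ge 0}\gamma^t P_\pi^t$.

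Next I bound each factor in sup-norm. Since $P_\pi$ is row-stochastic, $\norm{(I-\gamma P_\pi)^{-1}}_{\infty\to\infty} \le \sum_t \gamma^t = 1/(1-\gamma)$. For each state $s$, Hölder's inequality gives
\begin{align*}
\abs{((P_\pi - P'_\pi)V'_\pi)(s)} \le \norm{p(\cdot\vert s,\pi(s)) - p'(\cdot\vert s,\pi(s))}_1 \norm{V'_\pi}_\infty \le \eta \norm{V'_\pi}_\infty ,
\end{align*}
using $p'\in\mathcal{P}^{(p)}_\eta$. Finally, rewards lie in $[0,1]$, so $\norm{V'_\pi}_\infty \le \sum_{t\ge1}\gamma^t \le 1/(1-\gamma)$. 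Multiplying the three bounds yields $\norm{V_\pi - V'_\pi}_\infty \le \gamma\eta/(1-\gamma)^2 = O(\eta/(1-\gamma)^2)$.

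There is no real obstacle here. The only points needing care are the invertibility of $I-\gamma P_\pi$ and the norm bound on its inverse, both of which follow from the discount $\gamma<1$. This is exactly the ingredient that fails in the SSP setting of Lem.~\ref{lemma_simulation_ssp}: there, properness and the condition \eqref{eq_key_sim_lemma_ssp} must replace it. A cleaner route to the $\eta$-sensitivity would be to centre $V'_\pi$ by a constant before applying Hölder, which gains a factor $1/2$ since $\sum_y (p-p')(y)=0$. That refinement is not needed for the stated $O(\cdot)$ bound.
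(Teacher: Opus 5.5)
Your proof is correct and complete. The resolvent identity $V_\pi - V'_\pi = \gamma (I-\gamma P_\pi)^{-1}(P_\pi - P'_\pi)V'_\pi$ combines with $\norm{(I-\gamma P_\pi)^{-1}}_{\infty\to\infty}\le 1/(1-\gamma)$, H\"older's inequality with the $\ell_1$-closeness, and $\norm{V'_\pi}_\infty \le 1/(1-\gamma)$ to give $\gamma\eta/(1-\gamma)^2$. The paper does not prove this lemma at all; it quotes it from the literature only as a point of contrast. So there is no in-paper argument to match, and yours is the standard one.

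It is worth seeing where your argument sits relative to the paper's own SSP tools. Your identity is exactly the decomposition the paper uses in the proof of Thm.~\ref{theorem_cmin} (Eq.~\eqref{eq_important_sum}), with $(I-Q)^{-1}$ in place of $(I-\gamma P_\pi)^{-1}$. Your remark about centering $V'_\pi$ by a constant also appears there, as the ``shifting'' step when bounding $W(s_t)$.

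The paper's proof of the SSP simulation lemma (Lem.~\ref{lemma_simulation_ssp}), however, deliberately avoids a resolvent. There, properness of $\pi$ in $p$ is the thing to be established, so $(I-Q)^{-1}$ is not yet known to exist. Instead the paper rescales $V'$ by $(1-\eta\norm{V'}_\infty/c_{\min})^{-1}$ to obtain a supersolution of the Bellman inequality. It then invokes the monotonicity/properness criterion of Lem.~\ref{lemma_technical_lemma_ssp}. That multiplicative route yields properness as a by-product, at the price of a relative error bound. Only later, once properness and $V\le 2\Delta$ are secured, does the paper switch to your resolvent-style argument.

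In the discounted case the contraction settles both invertibility and the norm of the inverse in advance. That is why your short argument suffices there.
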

We spell out the key differences between the simulation lemma in the discounted setting (Lem.\,\ref{lemma_simulation_finite_horizon}) and in SSP (Lem.\,\ref{lemma_simulation_ssp}), bringing to light the criticalities in the latter setting. First, a guarantee can only be obtained if the condition \eqref{eq_key_sim_lemma_ssp} is verified, which involves both the accuracy~$\eta$ and the value function of $\pi$ in $p' \in \mathcal{P}_{\eta}^{(p)}$. We observe that the smaller the minimum cost $c_{\min}$, the smaller the accuracy $\eta$ needs to be. Importantly, $c_{\min}$ must be positive and the error scales inversely with it. Indeed, the \say{trajectory length} is captured not by a known hyperparameter $1/(1-\gamma)$ as in DMDPs, but by the ratio between the (a priori unknown) infinity norm of the value function of the policy and the minimum cost $c_{\min}$ (note that this ratio indeed has a time dimension and it upper bounds the expected goal-reaching time of the policy since $\norm{\mathbb{E}\left[\tau_{\pi}\right]}_{\infty} \leq \norm{V_{\pi}}_{\infty} / c_{\min}$). 

Similar to existing approaches in DMDP, we could leverage the simulation lemma to directly derive sample complexity guarantees. More precisely, building on the result of Lem.\,\ref{lemma_simulation_ssp} and plugging it in the algorithms proposed in Sect.\,\ref{sec_main_result} would eventually lead to sample complexities scaling as 
\begin{align}
    \SampCompOne &= \wt{O} \Bigg( \frac{B_{\star}^4 \Gamma S A}{c_{\min}^2 \epsilon^2} + \frac{B_{\star}^2 S^2 A}{c_{\min} \epsilon} \Bigg), \label{initialbound_1} \\
    \SampCompTwo &= \wt{O} \Bigg( \frac{\theta^2 D^2 B_{\star}^4 \Gamma S A}{\epsilon^4} + \frac{\theta D B_{\star}^2 S^2 A}{\epsilon^2} \Bigg). \label{initialbound_2}
\end{align}
In the following section, we will show that these two bounds can in fact be improved by refining the guarantee of Lem.\,\ref{lemma_simulation_ssp} thanks to variance-aware arguments (as also leveraged by e.g., \citealp{azar2013minimax, cohen2020near}). Notably, it will enable to shave off a $B^{\star} / c_{\min}$ dependency in the $\wt{O}(\epsilon^{-2})$ main-order term of Eq.\,\ref{initialbound_1} for the first case of $c_{\min} > 0$ (see Thm.\,\ref{theorem_cmin}). Furthermore, in the case of $c_{\min}=0$, the dependency on $\epsilon$ in Eq.\,\ref{initialbound_2} will be reduced from $\wt{O}(\epsilon^{-4})$ to $\wt{O}(\epsilon^{-3})$ (see Thm.\,\ref{theorem_theta}).

\section{Main Result}
\label{sec_main_result}

We first illustrate the common structure of our algorithms. As an input, we receive a desired accuracy $\epsilon \in (0,1)$, the confidence level $\delta \in (0,1]$, and the cost function $c \in [0,1]$. Since the model~$p$ is unknown, akin to~\citet{azar2013minimax} for DMDPs, we collect transition samples from the generative model and we use them to compute an estimate $\wh p$ by simply evaluating the frequency of transitions from each state-action pair $s,a$ to any state $s'$. In particular, we rely on a carefully tuned function to determine the number of transition samples that should be collected for every state-action pair. For some positive values of $X$ and $y$, we introduce the allocation function\footnote{The actual choice of $X$ and $y$ is algorithm-specific and it is illustrated later.}
\begin{align}
    \phi(X,y) := \alpha \cdot \Bigg( & \frac{X^3 \wh{\Gamma}}{y \epsilon^2} \log\left( \frac{X S A }{y \epsilon \delta} \right) + \frac{X^2 S}{y \epsilon} \log\left( \frac{X S A }{y \epsilon \delta} \right) + \frac{X^2 \wh{\Gamma}}{y^2} \log^2\left( \frac{X S A }{y \delta} \right) \Bigg)\CommaBin
\label{alloc_function}
\end{align}
where $\alpha > 0$ is a numerical constant and $\wh{\Gamma} := \max_{s,a} \norm{ \widehat{p}(\cdot\vert s,a)}_0 \leq \Gamma$ is the largest support of $\wh p$.

Let us now consider that the conditions of Lem.\,\ref{lemma_wellposedproblem} hold. A standard approach would then be to execute SSP-value iteration (\VI) on the estimated SSP-MDP $\wh M = \langle \mathcal S, \mathcal A, g, \wh p, c\rangle$ and return the corresponding optimal policy $\wh \pi$. While this approach is effective in DMDPs and finite-horizon problems, it may fail in the SSP setting. In fact, the estimated SSP-MDP may not even admit a proper optimal policy and deriving guarantees on the actual value of $\wh \pi$ (i.e., $V^{\wh \pi}$) may not be possible. As such, instead of solving the estimated SSP-MDP $\wh M$, we rather execute an extended value iteration (EVI) scheme tailored to SSP problems, which can be run efficiently as shown by \cite{tarbouriech2019no}. As detailed in App.\,\ref{app_EVI_SSP}, \EVI for SSP constructs confidence intervals for~$\wh p$ and builds a suitable SSP-MDP $\wt M = \langle \mathcal S, \mathcal A, g, \wt p, c\rangle$, where $\wt p$ belongs to the confidence intervals and is chosen so that the corresponding optimal policy $\wt \pi$ is optimistic w.r.t.\,to the optimal policy~$\pi^\star$ of $M$. More formally, let us now consider that our SSP-MDP at hand has a strictly positive cost function $c$ (which entails that the conditions of Lem.\,\ref{lemma_wellposedproblem} hold), and consider a set $\mathcal{N}$ of samples collected so far as well as a \VI precision level $\gammaVI > 0$. Then $\EVI\left(\mathcal{N}, c, \gammaVI \right)$  outputs an optimistic value vector $\wt{v}$ and an optimistic policy $\wt{\pi}$ that is greedy w.r.t.\,$\wt{v}$. Note that here (as opposed to \citealp{tarbouriech2019no}), we consider Bernstein-based concentration inequalities for \EVI, as it is done by \citet{cohen2020near} as well as in average-reward \EVI by \citet{improved_analysis_UCRL2B}. The crucial advantage of \EVI w.r.t.\,\VI run on the estimated SSP-MDP is that $\wt \pi$ is proper in $\wt M$. Indeed its value function in the optimistic model $\wt{p}$, denoted by $\wt{V}^{\wt{\pi}}$, is bounded with high-probability. This is shown by the following lemma (which stems from~\citealp[][Lem.~4 \& App.\,E]{tarbouriech2019no}), where we denote by ${V}^\star$ (resp.\,$\wt{V}^\star$) the optimal value function in the true model~$p$ (resp.\,optimistic model~$\wt p$).

\begin{lemma}\label{lem:optimism}
 For any cost function $c \geq c_{\min} > 0$, let $(\wt{v}, \wt{\pi}) = \EVI\left(\mathcal{N}, c, \gammaVI \right)$. Then with high probability, we have the component-wise inequalities $\wt{v} \leq V^{\star}$, $\wt{v} \leq \wt{V}^{\star} \leq \wt{V}^{\wt{\pi}}$, and if the \VI~precision level verifies $\gammaVI \leq \frac{c_{\min}}{2}$, then $\wt{V}^{\wt{\pi}} \leq \left(1 + \frac{2 \gammaVI}{c_{\min}}\right) \wt{v}$.
\end{lemma}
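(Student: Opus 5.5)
We work on the high-probability event $\mathcal{E}$ on which the true model $p$ lies in the Bernstein confidence set $\mathcal{P}$ built by \EVI from $\mathcal{N}$. By a union bound over $(s,a,s')$ and the empirical Bernstein inequality, $\mathcal{E}$ holds with probability at least $1-\delta$. All claims are then deterministic consequences of $\mathcal{E}$. The structure is that of \citet[][Lem.~4 \& App.\,E]{tarbouriech2019no}; only the shape of the confidence set changes, and that shape plays no role beyond $p\in\mathcal{P}$.

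\emph{Optimism.} Let $\wt{\mathcal{L}}V(s) := \min_{a}\min_{q\in\mathcal{P}(s,a)}\{c(s,a)+q^\top V\}$ be the extended Bellman operator. \EVI starts from $v_0=0$ and iterates $v_{n+1}=\wt{\mathcal{L}}v_n$ until $\norm{v_{n+1}-v_n}_\infty\le\gammaVI$, then outputs $\wt v := v_{n+1}$. On $\mathcal{E}$ we have $\wt{\mathcal{L}}V\le\mathcal{L}V$ for every $V$, and $\wt{\mathcal{L}}$ is monotone. Induction from $v_0=0\le V^\star$ gives $v_{n+1}=\wt{\mathcal{L}}v_n\le\wt{\mathcal{L}}V^\star\le\mathcal{L}V^\star=V^\star$, so $\wt v\le V^\star$.

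Define $\wt p$ as the model attaining the inner minimum at $\wt v$ (equivalently at $v_n$, depending on the convention fixed in App.\,\ref{app_EVI_SSP}). Let $\wt V^\star$ be the optimal value in $\wt p$ and $\wt{\mathcal{L}}_{\wt p}$ its Bellman operator, with $\wt{\mathcal{L}}_{\wt p}\ge\wt{\mathcal{L}}$. Then $\wt{\mathcal{L}}_{\wt p}$ is monotone and $v_n\le \wt V^\star$ by the same induction, since $\wt V^\star=\wt{\mathcal{L}}_{\wt p}\wt V^\star\ge\wt{\mathcal{L}}\wt V^\star$. This gives $\wt v\le\wt V^\star$. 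The inequality $\wt V^\star\le\wt V^{\wt\pi}$ is immediate by definition of the optimum. It holds even if $\wt\pi$ were improper, since then the right-hand side is $+\infty$ somewhere.

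\emph{Properness and the multiplicative bound.} This is the main step. Write $T:=\wt{\mathcal{L}}_{\wt p}^{\wt\pi}$ for the policy evaluation operator of $\wt\pi$ in $\wt p$. Because $\wt\pi$ and $\wt p$ are greedy at the last iterate, $T v_n = v_{n+1}=\wt v$. Using the stopping rule $v_n\ge \wt v-\gammaVI\mathbf{1}$ and linearity of $T$ in its argument,
\begin{align*}
T\wt v \le T v_n + \gammaVI\,\wt P\mathbf{1} \le \wt v + \gammaVI\mathbf{1}.
\end{align*}
Since $\wt v\ge v_1\ge c_{\min}\mathbf{1}$, we have $\gammaVI\mathbf{1}\le (\gammaVI/c_{\min})\wt v$. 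Also $c\ge c_{\min}\mathbf 1$, so $\gammaVI\mathbf{1} \le (\gammaVI/c_{\min}) c$. Set $w:=\lambda\wt v$ with $\lambda := (1-\gammaVI/c_{\min})^{-1}$. Then
\begin{align*}
c+\wt P_{\wt\pi}w = \lambda\big(T\wt v -(1-\lambda^{-1})c\big) \le \lambda\big(\wt v+\gammaVI\mathbf{1}-\tfrac{\gammaVI}{c_{\min}}c\big)\le w,
\end{align*}
so $w$ is a finite super-solution of $T$. Iterating $T^k0\le T^k w\le w$ bounds $\wt V^{\wt\pi}=\lim_k T^k 0$ by $w$. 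Finiteness of $w$ together with $c\ge c_{\min}>0$ forces $\mathbb{E}[\tau]<\infty$, so $\wt\pi$ is proper in $\wt p$.

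Finally, for $\gammaVI\le c_{\min}/2$ we have $\lambda\le 1+2\gammaVI/c_{\min}$, which is exactly the claimed bound. The delicate point is checking the greedy/stopping convention of App.\,\ref{app_EVI_SSP}, so that $T v_n=\wt v$ holds exactly. If instead $T\wt v=\wt{\mathcal{L}}\wt v$, the same argument applies with $v_{n+1}\le v_n+\gammaVI$ replacing the stopping bound.
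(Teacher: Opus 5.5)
Your proposal is correct and follows essentially the argument the paper relies on; the paper does not reprove this lemma but defers to Tarbouriech et al.\ (Lem.~4 \& App.~E). That argument is optimism via monotonicity of the extended Bellman operator and induction from $\wt v_0 = 0$, and properness plus the multiplicative bound via the finite super-solution $(1-\gammaVI/c_{\min})^{-1}\wt v$, which is the same rescaling trick the paper uses in its proof of Lem.~\ref{lemma_simulation_ssp}. Your extra step for the ``output the last iterate'' convention is harmless but not needed: under App.~\ref{app_EVI_SSP}'s convention, where $\wt\pi$ is greedy w.r.t.\ the returned vector $\wt v_j$, one has $\wt{\mathcal{L}}^{\wt\pi}\wt v_j = \wt v_{j+1} \le \wt v_j + \gammaVI\mathbf{1}$ directly.
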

We are now ready to detail our algorithms.

\subsection{Strictly Positive Cost Function}

When $c_{\min} > 0$, we seek to achieve the standard PAC guarantees of Def.\,\ref{def_eps_opt}. The algorithm is reported in Alg.\,\ref{algo_1}. Since no prior knowledge about the optimal policy is available, the algorithm's subroutine \texttt{SEARCH} (Alg.\,\ref{algo_while_loop}) relies on a doubling scheme to guess the range of the optimal value function $B_\star$. Starting with $\Delta=1$, we use the allocation function $\phi$ to determine a sufficient number of samples to compute an $\epsilon$-optimal policy \textit{if} the range of the optimal policy was smaller than $\Delta$. We then test whether $\Delta$ is indeed a valid upper bound on the range of the optimistic value returned by \EVI and, relying on Lem.\,\ref{lem:optimism}, we stop whenever the test is successful and return $\wt \pi$. Otherwise, we double the guess $\Delta$ and reiterate. Since $\phi$ is increasing in its first argument, the total number of samples required at iteration is also increasing.

\begin{algorithm2e}[t!]%
\caption{Algorithm for $c_{\min} > 0$}
    \label{algo_1}
  \KwIn{ cost function $c$ with minimum cost $c_{\min} > 0$, accuracy $\epsilon > 0$, confidence level $\delta \in (0,1)$, allocation function $\phi(\cdot, \cdot)$.} 
  $\wt \pi := \texttt{SEARCH}(c)$. \\
  \KwOut{the policy $\wt{\pi}$.}
\end{algorithm2e}

\begin{algorithm2e}[t!]
     \caption{\texttt{SEARCH}}
      \label{algo_while_loop}
  
    \KwIn{A positive cost function $c'$.}
    Set $\iota := \min_{s,a}{c'(s,a)}$ the minimum cost of $c'$. \\
    Set $\Delta := \tfrac{1}{2}$ and \texttt{continue} = \texttt{True}, sample set $\mathcal N = \emptyset$. \\
    \While{\textup{\texttt{continue}}}{ 
  Set $\Delta \leftarrow 2 \Delta$. \\
  Add samples obtained from the generative model to $\mathcal{N}$ until $\phi(\Delta, \iota)$ samples are available at each state-action pair. \\
  Compute $(\wt{v}, \wt{\pi}) := \EVI\left(\mathcal{N}, c', \gammaVI := \frac{\iota \epsilon}{6 \Delta}\right)$ with $\mathcal{N}$ the samples collected so far. \\
  \If{$\norm{\wt{v}}_{\infty} \leq \Delta$}{
  \texttt{continue} = \texttt{False}.
  }
  }
   \KwOut{the policy $\wt{\pi}$.}
\end{algorithm2e}

\begin{theorem}\label{theorem_cmin}
For any accuracy $\epsilon\in (0,1]$, confidence $\delta\in (0,1)$, and cost function $c$ in $[c_{\min}, 1]$, with $c_{\min} > 0$, Algorithm~\ref{algo_1} (with the allocation function of Eq.\,\ref{alloc_function}) is $(\epsilon,\delta)$-optimal with a sample complexity bounded as follows
\begin{align*}
    \SampCompOne = \wt{O} \Bigg( & \frac{B_{\star}^3 \Gamma S A}{c_{\min} \epsilon^2} + \frac{B_{\star}^2 S^2 A}{c_{\min} \epsilon} + \frac{B_{\star}^2 \Gamma S A}{c_{\min}^2} \Bigg).
\end{align*}
\end{theorem}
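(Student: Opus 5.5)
The plan has three parts: show that \texttt{SEARCH} stops after finitely many rounds, show that the returned policy is $\epsilon$-optimal, and add up the samples.

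\textbf{Termination and sample count.} We work on the high-probability event of Lem.\,\ref{lem:optimism}. We take a union bound over the at most $O(\log_2 B_\star)$ doubling rounds and over all $(s,a)$, with Bernstein/empirical-Bernstein confidence sets, so the confidence level stays of order $\delta$. By optimism, $\wt v \le V^\star$, hence $\norm{\wt v}_\infty \le B_\star$. The test $\norm{\wt v}_\infty\le\Delta$ therefore succeeds at the latest once $\Delta \ge B_\star$, so the final guess satisfies $\Delta \le 2B_\star$. Since $\phi$ is increasing in $X$ and the guesses double, the total sample count is $SA\,\phi(2B_\star,c_{\min})$ up to logarithmic factors. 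Plugging $X=2B_\star$, $y=c_{\min}$ into Eq.\,\ref{alloc_function} gives exactly the three terms $\frac{B_\star^3\Gamma SA}{c_{\min}\epsilon^2}+\frac{B_\star^2 S^2A}{c_{\min}\epsilon}+\frac{B_\star^2\Gamma SA}{c_{\min}^2}$, using $\wh\Gamma\le\Gamma$.

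\textbf{Properness and optimism at stopping.} Let $\Delta$ be the guess at which the test passes. We have $\gammaVI=\frac{c_{\min}\epsilon}{6\Delta}\le c_{\min}/2$, so Lem.\,\ref{lem:optimism} gives
\[
\wt V^{\wt\pi}\le\bigl(1+\tfrac{\epsilon}{3\Delta}\bigr)\wt v\le \wt v+\tfrac{\epsilon}{3}.
\]
Thus $\norm{\wt V^{\wt\pi}}_\infty\le 2\Delta$. This bounds the expected hitting time of $\wt\pi$ in $\wt p$ by $2\Delta/c_{\min}$.

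\textbf{Accuracy.} It remains to show $V^{\wt\pi}-\wt V^{\wt\pi}\le 2\epsilon/3$, because then
\[
V^{\wt\pi}\le \wt v+\epsilon\le V^\star+\epsilon.
\]
This is the main obstacle. The crude route is Lem.\,\ref{lemma_simulation_ssp} with $\eta\approx\sqrt{\Gamma/n}$, which only yields Eq.\,\ref{initialbound_1}. Instead I would use a variance-aware argument in the spirit of \citet{azar2013minimax,cohen2020near}. First, properness of $\wt\pi$ in $p$ is obtained from Lem.\,\ref{lemma_simulation_ssp}: its condition $\eta\cdot 2\Delta\le 2c_{\min}$ holds because $\phi$ contains the $\Delta^2\Gamma/c_{\min}^2$ term. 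Then write the value-difference identity
\[
V^{\wt\pi}(s)-\wt V^{\wt\pi}(s)=\mathbb E_{p,\wt\pi}\Bigl[\sum_t (p-\wt p)(\cdot|s_t,a_t)^\top \wt V^{\wt\pi}\Bigr].
\]
Bound each summand by Bernstein:
\[
\sqrt{\frac{\Gamma\,\mathbb V(p,\wt V^{\wt\pi})\,L}{n}}+\frac{S\Delta L}{n},
\]
where $n=\phi(\Delta,c_{\min})$ and $L$ is the log factor. Next apply Cauchy--Schwarz along the trajectory. The expected number of steps is at most $\Delta/c_{\min}$ up to constants, since $V^{\wt\pi}\lesssim\Delta$ in $p$ by the previous step. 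The expected total variance $\mathbb E\sum_t\mathbb V(p,\wt V^{\wt\pi})(s_t)$ is $O(\Delta^2)$ by the law of total variance for SSP (as in \citet{cohen2020near}), after controlling the cross term between $\wt V^{\wt\pi}$ and $V^{\wt\pi}$. This yields an error of order
\[
\sqrt{\frac{\Gamma\Delta^3L}{c_{\min}n}}+\frac{S\Delta^2L}{c_{\min}n}.
\]
Both terms are at most $\epsilon/3$ by the first two terms of $\phi$. The delicate part is keeping the argument non-circular: the step count and variance bounds rely on $V^{\wt\pi}=O(\Delta)$ in the true model. I would first obtain this coarse bound from Lem.\,\ref{lemma_simulation_ssp} and then refine it with the Bernstein argument.
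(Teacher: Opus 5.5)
Your proposal is correct. Its skeleton matches the paper's: doubling with $\Delta\le 2B_\star$, the coarse simulation lemma for properness and $V^{\wt\pi}=O(\Delta)$, the identity $V-\wt V=(I-Q)^{-1}W$, and the shifting trick plus Bernstein at each step.

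The genuine difference is how you sum over time. The paper cuts the trajectory into intervals of accumulated cost $\Delta$, each of length at most $2\Delta/c_{\min}$, and applies Cauchy--Schwarz inside each interval. It then imports a per-interval variance bound $O(\Delta^2)$ from \citet[][Lem.\,4.7]{cohen2020near}. Finally, it sums the geometrically decaying interval-survival probabilities via Lem.\,\ref{lemma_quantify_goal_reaching_probability}; the same tail lemma gives $\sum_t\mathbb{P}(s_t\neq g)\le 19\Delta/c_{\min}$ for the lower-order terms.

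You instead use a single Cauchy--Schwarz over time and probability jointly,
\[
\sum_t \mathbb{E}\bigl[\mathds{1}_{s_t\neq g}\sqrt{\mathbb{V}(s_t)}\bigr]\le \sqrt{\mathbb{E}[\tau]}\,\sqrt{\mathbb{E}\textstyle\sum_{t<\tau}\mathbb{V}(s_t)},
\]
with $\mathbb{E}[\tau]\le\norm{V}_\infty/c_{\min}$ read off directly, together with a whole-trajectory variance bound. That bound does hold, but the ``cross term'' must be handled by telescoping. Bounding $\mathbb{V}_p(\wt V-V)\le\norm{\wt V-V}_\infty^2$ step by step would lose a factor $\Delta/c_{\min}$. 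Instead, write $\mathbb{V}_p(\wt V)=p\wt V^2-(p\wt V)^2$ and use $\wt V-p\wt V=c+(\wt p-p)\wt V$. This gives
\[
\mathbb{E}\textstyle\sum_{t<\tau}\mathbb{V}(s_t)\le 2\norm{\wt V}_\infty\bigl(V(s)+\mathbb{E}[\tau]\,\eta\norm{\wt V}_\infty\bigr)=O(\Delta^2),
\]
using the same accuracy $\eta\lesssim c_{\min}/\Delta$ you already need for properness.

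Your route is more elementary and self-contained: no intervals, no exponential tail lemma, and no appeal to the sample-count condition behind Cohen et al.'s per-interval bound. The paper's route mirrors the regret machinery and makes the pseudo-discounting picture explicit. Both give the same orders $\sqrt{\Gamma\Delta^3L/(c_{\min}n)}$ and $S\Delta^2L/(c_{\min}n)$.

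One bookkeeping point needs fixing. Your per-step bound carries the true support $\Gamma$, but $\phi$ is computed with $\wh\Gamma\le\Gamma$. So the claim that both terms are at most $\epsilon/3$ by the first two terms of $\phi$ does not follow as written when $\wh\Gamma<\Gamma$. There are two easy fixes:
\begin{itemize}
\item Keep the empirical widths $\sqrt{\wh p(s')L/n}$, so that Cauchy--Schwarz runs over the support of $\wh p$. You then pay $\sum_{s'}(\wh p-p)(s')(\wt V(s')-Z(s_t))^2\le 4\Delta^2\eta$, which is exactly the paper's term \ding{203}.
\item Alternatively, note that on $\mathcal{E}$ every $s'$ in the support of $p$ but not of $\wh p$ has $p(s')\le 28L/n$. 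Its contribution is therefore absorbed into the $S\Delta L/n$ term.
\end{itemize}
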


\paragraph{Proof sketch} Throughout we assume that the standard high-probability event of satisfied concentration inequalities holds (Lem.\,\ref{lemma_high_prob}). The analysis starts by showing that enough samples per state-action pair are available to guarantee that the candidate optimistic policy $\wt \pi$ is proper not only in the optimistic model $\wt p$ but also in the true model $p$. This is proved by applying the simulation lemma for SSP of Lem.\,\ref{lemma_simulation_ssp}. Hence, the value functions in the true and optimistic models, denoted by $V^{\wt \pi}$ and $\wt V^{\wt \pi}$ respectively, are each bounded component-wise and close enough up to a multiplicative constant. Moreover, by optimism and choice of \VI accuracy $\gammaVI$, we obtain that $\wt{V}^{\wt \pi} \leq V^{\star} + O(\epsilon)$ component-wise. In addition we can prove that the doubling scheme of Alg.\,\ref{algo_1} guarantees that $\Delta \leq 2 B_{\star}$. Putting everything together implies two important properties: (i) it holds that $ \norm{V^{\wt \pi}}_{\infty} = O(B_{\star})$, and (ii) it is sufficient to bound $V^{\wt \pi} - \wt{V}^{\wt \pi}$ in order to obtain the sought-after guarantee of Def.\,\ref{def_eps_opt}. To do so, subtracting the two respective Bellman equations yields
\begin{align*}
    V^{\wt \pi}(s) - \wt{V}^{\wt \pi}(s) &= \sum_{y \in \cS} p(y \vert s, \wt \pi(s)) (V^{\wt \pi}(y) - \wt{V}^{\wt \pi}(y)) + W(s),
\end{align*}
where we introduce 
\begin{align*}
    W(s) := \sum_{y \in \cS} (p(y \vert s, \wt \pi(s)) - \wt{p}(y \vert s, \wt \pi(s))) \wt{V}^{\wt \pi}(y).
\end{align*}
Let us denote by $Q^{\wt \pi} \in \mathbb{R}^{S \times S}$ the transition matrix between the non-goal states under policy $\wt \pi$ in the true model $p$ (i.e., for any $(s,s') \in \cS, Q^{\wt \pi}(s,s') := p(s' \vert s, \wt \pi(s))$). Since $\wt{\pi}$ is proper in $p$, $Q^{\wt \pi}$ is strictly substochastic which implies that the matrix $(I-Q^{\wt \pi})$ is invertible, and therefore we have
\begin{align} \label{eq_important_sum}
    V^{\wt \pi}(s) - \wt{V}^{\wt \pi}(s) &= \left[ (I - Q^{\wt \pi})^{-1} W \right]_s = \sum_{t=0}^{+ \infty} \mathbb{E}_{\wt{\pi},p} \Big[ \mathds{1}_{s_t \neq g} W(s_t) ~ \vert s_0 = s \Big].
\end{align}
We now apply variance-aware arguments, similar to e.g., \citet{azar2013minimax, azar2017minimax, cohen2020near, improved_analysis_UCRL2B}, in order to decompose $W(s_t)$ and thus obtain
\begin{align*}
    V^{\wt \pi}(s) - \wt{V}^{\wt \pi}(s) &\leq  \text{\ding{202}} + \text{\ding{203}} + \text{\ding{204}},
\end{align*}
with
\begin{align*}
  \text{\ding{202}} &= \wt{O}\left( \sqrt{\frac{\wh{\Gamma}}{n}} \sum_{t=0}^{+ \infty} \mathbb{E}_{\wt{\pi},p} \left[ \mathds{1}_{s_t \neq g} \sqrt{\mathbb{V}(s_t)} \right] \right), \\
  \text{\ding{203}} &= \wt{O}\left( \sqrt{\frac{ \wh{\Gamma} }{n}} \sqrt{ c_{\min} \Delta} \sum_{t=0}^{+ \infty} \mathbb{P}_{\wt{\pi},p} ( s_t \neq g ) \right), \\
  \text{\ding{204}} &= \wt{O}\left( \frac{\Delta S}{n} \sum_{t=0}^{+ \infty} \mathbb{P}_{\wt{\pi},p} ( s_t \neq g )\right),
\end{align*}
where $n$ denotes the minimum number of samples collected at each state-action pair and where we define the empirical branching factor $\wh \Gamma := \max_{s,a} \norm{\wh{p}(\cdot \vert s,a)}_{0} \leq \Gamma$ as well as the following variance quantity
\begin{align*}
    \mathbb{V}(s_t) := \sum_{s' \in \cS'} p(s' \vert s_t, \wt \pi(s_t)) \left( \wt{V}^{\wt \pi}(s') - \sum_{s''\in \cS} p(s'' \vert s_t, , \wt \pi(s_t)) \wt{V}^{\wt \pi}(s'') \right)^2.
\end{align*}
The series that appears in the terms \ding{203} and \ding{204} is bounded by leveraging the exponential decay of the probability of not reaching the goal w.r.t.\,the time step $t$, i.e., 
\begin{align}\label{eq_prob_exp_decay}
    \mathbb{P}_{\wt \pi, p}\left(s_t \neq g\right) = O\left( \exp\left(-\frac{c_{\min} t}{ \norm{V^{\wt \pi}}_{\infty}}\right)\right).   
\end{align}
Furthermore, the series appearing in term \ding{202} is split by the Cauchy-Schwarz inequality and by the time step decomposition of \citet{cohen2020near} into \textit{intervals} that are carefully constructed so that the expected variance $\mathbb{V}$ accumulated over a whole interval is adequately bounded. Ultimately, there remains to invert the equation $\text{\ding{202}} + \text{\ding{203}} + \text{\ding{204}} \leq \epsilon$ w.r.t.\,$n$ in order to obtain a lower bound on $n$.

\paragraph{High-level connection to DMDP analysis} Here we point out a high-level parallel with the DMDP analysis of \citet{azar2013minimax}. Recall informally that the latter analysis handles sums of the sort $\left[ (I - \gamma P^{\wh \pi})^{-1} W \right]_s = \sum_{t=0}^{+ \infty} \gamma^t \mathbb{E}_{\wh{\pi},p} \Big[ W(s_t) ~ \vert s_0 = s \Big]$, for different yet related quantities $W$. In contrast, the SSP setting does not display a natural discount factor in the Bellman equations ($\gamma=1$ in Eq.\,\ref{bellman_eq}). Instead, the \say{shrinking} of summands in Eq.\,\ref{eq_important_sum} is captured by the indicator $\mathds{1}_{s_t \neq g}$, which is an \textit{unknown, time-dependent, state-dependent and policy-dependent} quantity. Crucially, we obtain that its \textit{expectation} displays an exponential decay similar to the $\gamma^t$-phenomenom observed in DMDPs, specifically Eq.\,\ref{eq_prob_exp_decay}. As such, we can argue that any proper policy $\pi$ in SSP displays a pseudo-discounting property with rate $\gamma_{\pi} \sim \exp{\left(-c_{\min} / \norm{V^{\pi}}_{\infty}\right)} < 1$. Despite this interesting connection with the DMDP analysis, note that one cannot simply plug in the DMDP analysis for the SSP setting considered here (recall that DMDPs are a subclass of SSP-MDPs, not the other way around). In fact, we need to consider SSP-specific analytical tools to handle the upper bounding of Eq.\,\ref{eq_important_sum} (e.g., interval decomposition) as alluded to above and detailed in App.\,\ref{investigation_minimum_value_n}.

\subsection{Any Cost Function and Restricted Optimality}
\label{subsection_cmin_0}

Whenever $c_{\min} = 0$, Alg.\,\ref{algo_1} has a possibly unbounded sample complexity. To handle this, we add a small perturbation to all the costs (denoted by $\nu$) during the computation of the optimistic policies. Note that this perturbation technique is also employed by e.g.,~\citet{bertsekas2013stochastic, tarbouriech2019no, cohen2020near}. Executing Alg.\,\ref{algo_1} with the modified cost function would directly return a policy that is $\epsilon$-optimal w.r.t.\ the optimal policy of the SSP-MDP with perturbed cost. Nonetheless, this is not a significant guarantee, since it does not say anything about the performance of the policy in the original SSP-MDP. For this reason, we rather derive $\epsilon$-optimality guarantees w.r.t.\ a set of restricted policies, as discussed in Sect.\,\ref{sec_preliminaries}. Note that Def.\,\ref{def_eps_opt.theta} involves the unknown quantities $\{D_s\}_{s \in \cS}$. In fact, in order to properly tune the cost perturbation~$\nu$ and return an $\epsilon$-optimal policy, we need to compute an upper bound $\wh D$ the SSP-diameter. This requires an additional initial phase to perform such estimation step. We explain the procedure in App.\,\ref{app_est_diam} (Alg.\,\ref{algo:subroutine}) and show that the amount of samples used in this initial phase is subsumed in the final sample complexity bound by the second phase where we compute the final candidate policy. 

The second phase is basically the same as in Alg.\,\ref{algo_1} except for the perturbation on the original cost function by $\nu$ and a slightly different precision level $\gammaVI$. The analysis simply applies Thm.\,\ref{theorem_cmin} in the \textit{perturbed model} and leverages the optimality restriction of Def.\,\ref{def_restriction} to control the bias induced by the cost perturbation (see App.\,\ref{app_proof_theta_thm}). We obtain the following sample complexity guarantees.

\begin{theorem}\label{theorem_theta}
For any accuracy $\epsilon\in (0,1]$, confidence $\delta\in (0,1)$, cost function $c$ in $[0, 1]$, and slack parameter $\theta \geq 1$, Algorithm~\ref{algo_2} (with the allocation function of Eq.\,\ref{alloc_function}) is $(\epsilon,\delta,\theta)$-optimal with a sample complexity bounded as follows
\begin{align*}
    \SampCompTwo = \wt{O} \Bigg( & \frac{\theta D B_{\star}^3 \Gamma S A }{\epsilon^3} + \frac{\theta D B_{\star} S^2 A}{\epsilon^2} + \frac{\theta^2 D^2 B_{\star}^2 \Gamma S A}{\epsilon^2} \Bigg).
\end{align*}
\end{theorem}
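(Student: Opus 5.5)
The plan is to reduce Thm.\,\ref{theorem_theta} to Thm.\,\ref{theorem_cmin} applied to a perturbed cost $c_\nu := \max\{c, \nu\}$ (or $c+\nu$), where $\nu>0$ is tuned from an estimate $\wh D$ of the SSP-diameter. In this way the perturbed problem has $c_{\min} = \nu$. The algorithm has two phases. First, the initial estimation phase (Alg.\,\ref{algo:subroutine}) returns $\wh D$ with $D \le \wh D \le O(D)$ with high probability. I would obtain this by running \texttt{SEARCH} with the unit cost $c'\equiv 1$, which is exactly the expected-hitting-time problem. Thm.\,\ref{theorem_cmin} with $c_{\min}=1$ and $B_\star = D$ gives an $O(1)$-accurate hitting-time policy using $\wt O(D^3\Gamma SA + D^2 S^2 A + D^2\Gamma SA)$ samples, and this is subsumed by the final bound. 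Second, run \texttt{SEARCH}$(c_\nu)$ with $\nu := \epsilon/(c\,\theta \wh D)$ for a suitable constant $c$. Since both phases use a union bound with $\delta/2$, the whole argument is carried out on the intersection of the high-probability events.

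\textbf{Bias control.} Let $V_\nu^\pi$ denote values under $c_\nu$. For every policy, $V^\pi \le V^\pi_\nu \le V^\pi + \nu\,\mathbb{E}[\tau_\pi]$. For $\pi \in \Pi_\theta$ this gives $V^\pi_\nu(s) \le V^\pi(s) + \nu\theta D_s \le V^\pi(s) + \epsilon/c$. Hence $V^\star_\nu \le V^\star_\theta + \epsilon/3$, using that a minimizer over the finite set $\Pi_\theta$ exists. Applying Thm.\,\ref{theorem_cmin} in the perturbed model yields $\wt\pi$ with $V_\nu^{\wt\pi} \le V_\nu^\star + \epsilon/3$. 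Then $V^{\wt\pi} \le V^{\wt\pi}_\nu \le V^\star_\theta + 2\epsilon/3$. The definition of $(\epsilon,\delta,\theta)$-optimality is one-sided in spirit: we only need $V^{\wt\pi}\le V^\star_\theta+\epsilon$. If the statement is read with an absolute value, I would interpret the guarantee as this upper bound, since $\wt\pi$ need not lie in $\Pi_\theta$ and may beat $V^\star_\theta$.

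\textbf{Sample complexity.} Plug into Thm.\,\ref{theorem_cmin} the perturbed quantities $c_{\min}=\nu = \Theta(\epsilon/(\theta D))$ and $B_{\star,\nu} \le B_\star + \nu D \le B_\star + \epsilon = O(B_\star)$. To see the latter, take the proper policy minimizing the hitting time and note that $B_\star \ge$ optimal values; more carefully, $B_{\star,\nu}\le \min_\pi\max_s V^\pi_\nu$ is bounded by $B_\star+\nu\cdot O(D)$ after a short argument, or one simply carries $B_\star + \epsilon$ along. The three terms become $B_\star^3\Gamma SA\,\theta D/\epsilon^3$, $B_\star^2S^2A\,\theta D/\epsilon^2$ and $B_\star^2\Gamma SA\,\theta^2D^2/\epsilon^2$. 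The middle term as stated has $B_\star$ rather than $B_\star^2$. I expect this to come from using $B_{\star,\nu}$ bounds more finely, or from the allocation's $X^2S/(y\epsilon)$ term with $X=\Delta$. I would verify this from the proof of Thm.\,\ref{theorem_cmin} rather than its statement. Also, $\gammaVI$ must be set with $\iota=\nu$.

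\textbf{Main obstacle.} The hardest steps are the diameter estimation phase and the subtlety of relating $B_{\star,\nu}$ to $B_\star$. The first is subtle because $D_s$ enters via $\Pi_\theta$ and $\wh D$ must overestimate $D$ without overshooting by more than a constant. For the second, $B_\star$ is defined as a max over states of min over proper policies, and the policy achieving $B_\star$ may have a large hitting time. I would handle this by bounding $V^\star_\nu(s)\le V^{\pi_D}_\nu(s)$ only when needed and otherwise falling back on $V^\star_\nu\le V^\star_\theta+\epsilon$ with $\theta=1$-type arguments. Equivalently, I would replace $B_\star$ by $\|V^\star_\nu\|_\infty$ throughout and then bound it.
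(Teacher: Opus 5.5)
Your reduction and bias control are the paper's own argument. You perturb to $c\vee\nu$, apply Thm.~\ref{theorem_cmin} in the perturbed model, and compare with a minimizer over $\Pi_\theta$ via $V^\pi_\nu\le V^\pi+\nu\,\mathbb{E}[\tau_\pi]\le V^\pi+\nu\theta\wh{D}$. Your remark that only the one-sided bound $V^{\wt{\pi}}\le V^\star_\theta+\epsilon$ is provable is correct, and that bound is all the paper's proof establishes.

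\textbf{The gap: the diameter phase.} You replace Alg.~\ref{algo:subroutine} by \texttt{SEARCH} with unit cost. You then claim that its cost, $\wt{O}(D^3\Gamma SA+D^2S^2A)$ from Thm.~\ref{theorem_cmin} with $B_\star=D$, $c_{\min}=1$, $\epsilon=O(1)$, is subsumed by the final bound. It is not. For fixed $B_\star,\theta,\epsilon,S,A,\Gamma$, the target bound is at most quadratic in $D$ in its $\Gamma SA$ terms and linear in $D$ in its $S^2A$ term. Yet $D$ can be arbitrarily large with $B_\star=1$: take a zero-cost state that moves with probability $q$ to a unit-cost state leading to $g$.

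The extra power of $D$ comes from invoking an additive-accuracy guarantee, whereas tuning $\nu$ only needs $\wh{D}$ within a constant factor. Relative accuracy is exactly what Lem.~\ref{lemma_simulation_ssp} (with $c_{\min}=1$) delivers once $\norm{\wh{p}-p}_1\lesssim 1/D$. Alg.~\ref{algo:subroutine} reaches this regime by doubling $W$ with $\eta=\epsilon/W$. It certifies $D\le\wh{D}\le O(D)$ with $\wt{O}(D^2\Gamma SA/\epsilon^2+DS^2A/\epsilon)$ samples, whose $D$-dependence matches the third term of the theorem.

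Your route also leaves open how to compute a certified $\wh{D}\ge D$. A near-optimal policy alone does not give one. You would need an internal step of the proof of Thm.~\ref{theorem_cmin}, for example $V^{\wt{\pi}}\le\wt{v}+\epsilon$ component-wise, which gives $\wh{D}:=\norm{\wt{v}}_\infty+\epsilon$.

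\textbf{The accounting.} The worry in your last paragraph is justified: $B_{\star,\nu}\le B_\star+\nu D$ fails in general. Take one state with a unit-cost action to $g$ and a zero-cost action reaching $g$ with probability $q$. Then $B_\star=0$ and $D=1$, but $\norm{V^\star_\nu}_\infty=\min\{1,\nu/q\}\to 1$ as $q\to 0$. The paper does not resolve this either; it simply applies Thm.~\ref{theorem_cmin} in the perturbed model. What the argument actually supports is the bound with $B_\star$ replaced by $\norm{V^\star_\nu}_\infty\le\max_s V^\star_\theta(s)+O(\epsilon)\le D+O(\epsilon)$. The last inequality holds because the hitting-time-optimal policy lies in $\Pi_1\subseteq\Pi_\theta$ and $c\le 1$.

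Likewise, your $\theta D B_\star^2 S^2A/\epsilon^2$ for the middle term is what direct substitution gives, consistent with Eq.~\eqref{initialbound_2}. The paper contains no finer step that yields a single power of $B_\star$, so there is nothing further to extract there.
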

In Thm.\,\ref{theorem_theta} the slack parameter $\theta \geq 1$ is implicitly considered as a bounded constant which implies that $\Pi_{\theta} \subsetneq \Pi$. It is possible to link $\theta$ to the accuracy $\epsilon$, by for example instantiating $\theta = \epsilon^{-1}$. In this special case, at the cost of a worse dependency on $\epsilon$ for Thm.\,\ref{theorem_theta} (namely, in $\wt{O}(\epsilon^{-4})$), we obtain an $\epsilon$-accurate guarantee w.r.t.\,the optimal proper policy as $\epsilon$ tends to $0$, since $\lim_{\epsilon \rightarrow 0} \Pi_{\epsilon^{-1}} = \Pi$.

\begin{algorithm2e}[t!]
   \caption{Algorithm for $\theta < + \infty$}
    \label{algo_2}
  \KwIn{ slack parameter $\theta \geq 1$, accuracy $\epsilon > 0$, confidence level $\delta \in (0,1)$, cost function $c$, allocation function $\phi(\cdot, \cdot)$.} 
  First compute $\wh{D}$ an upper bound estimate of the SSP-diameter (see Alg.\,\ref{algo:subroutine} of App.\,\ref{app_est_diam}). \\
  Set cost perturbation $\nu = \frac{\epsilon}{2 \theta \wh{D}}$. \\
  $\wt{\pi} = \texttt{SEARCH}(c \lor \nu)$. \\
  \KwOut{the policy $\wt{\pi}$.}
\end{algorithm2e}

\section{Discussion} 

In this section we discuss the bounds obtained in Thm.\,\ref{theorem_cmin} and \ref{theorem_theta} and compare them with existing bounds in related settings.

First, let us consider the unit-cost case ($c_{\min}=1$). In this case, it is easy to show that the sample complexity is lower-bounded as $\Omega(S A (B_{\star})^3 / \epsilon^2)$. In fact, the inclusion DMDP $\subset$ SSP-MDP, and the mapping $1/(1-\gamma) = B_{\star}$ in the unit-cost case (see footnote\footnoteref{footnote1}), allows us to directly inherit the lower bound of \citet{azar2013minimax} in DMDPs, which scales as $\Omega(S A / (1-\gamma)^3 \epsilon^2)$. This shows that in this case, the sample complexity in Thm.\,\ref{theorem_cmin} matches the lower bound in the $\epsilon$, $A$ and $B_{\star}$ terms.

As for the dependency on the state space, our bound scales as $\Gamma S$ with $\Gamma \in [1, S+1]$ the maximal branching factor. While in many environments $\Gamma = O(1)$ as long as the dynamics are not too chaotic, $\Gamma S$ may scale with $S^2$ in the worst case. This possibly quadratic dependency in~$S$ is worse than the linear dependency for sample complexity in DMDPs with a generative model \citep{azar2013minimax}. This bound mismatch is also present between SSP-MDP and finite-horizon in the regret minimization framework, where no-regret algorithms for SSP \citep{tarbouriech2019no, cohen2020near} scale as $\wt{O}(S)$, which contrasts with the lower bound in $\sqrt{S}$ derived by \citet{cohen2020near} and with regret bounds in finite-horizon \citep[e.g.,][]{azar2017minimax} which match the $\sqrt{S}$ lower bound. How to improve the state dependency for the SSP setting remains an open question, whether it be in the regret minimization or sample complexity setting. Note that the $\Gamma S$ dependency stems from the analysis estimating the transition kernel accurately well across the state-action space. As an immediate byproduct, this implies that after its sample collection phase, each algorithm Alg.\,\ref{algo_1} or \ref{algo_2} can actually guarantee $\epsilon$-optimal planning for \textit{any} cost function in $[c_{\min}, 1]$, respectively where $c_{\min} > 0$ (Thm.\,\ref{theorem_cmin}) and where $c_{\min} = 0$ with $\theta < + \infty$ (Thm.\,\ref{theorem_theta}). 

The role of the effective horizon $H$ in finite-horizon or $1/(1-\gamma)$ in DMDPs is captured in the SSP setting by the ratio $B_{\star}/ c_{\min}$ (when $c_{\min} > 0$). Compared to the application of the simulation lemma for SSP (Lem.\,\ref{lemma_simulation_ssp}), the use of variance-aware techniques succeeds in shaving off a term $B_{\star}/ c_{\min}$ in the main order term of the sample complexity. Our analysis combines techniques on regret minimization for the SSP problem~\citep{cohen2020near} and on sample complexity of DMDPs with a generative model~\citep{azar2013minimax}. The latter work removes a factor $1 / (1-\gamma)$, with $\gamma < 1$ the discount factor. As we fleshed out in our analysis, the role of the discount factor~$\gamma$ in DMDPs is implicitly captured in SSP by the policy-dependent indicator $\mathds{1}_{\{s_t \neq g\}}$, whose expectation decays exponentially w.r.t.\,the time $t$ with rate scaling as $\exp{\left(-c_{\min} / B_{\star}\right)} < 1$. Recall that this high-level parallel between the SSP-MDP and DMDP settings is not surprising insofar as, more generally, DMDPs are a subclass of SSP-MDPs~\citep{bertsekas1995dynamic}. 

We can wonder whether the $c_{\min}$ dependency is unavoidable or not in the sample complexity result of Thm.\,\ref{theorem_cmin} for the case $c_{\min} > 0$. While we do not have a definite answer, we can investigate the question by drawing a high-level analogy with the finite-horizon setting. Taking the regret of \UCBVI \citep{azar2017minimax} in the stationary case $\sum_k (V^{\pi_k} - V^{\star}) \leq \sqrt{H^2 S A K}$, and performing a regret-to-PAC conversion, we notice that to obtain $V^{\pi} - V^{\star} \leq \epsilon$, we require $K' \approx H^2 S A / \epsilon^2$ episodes and hence $H K'$ time steps of sample complexity, since each episode accounts for $H$ interactions with the environment. We thus see that the dependency in $H^3 = H^2 H$ can be decomposed as $H^2$ capturing the range of the value function, and another $H$ capturing the length of an episode. Now by analogy, pretending such a regret-to-PAC conversion works in SSP (which we recall from Sect.\,\ref{sec_preliminaries} is not the case), we would obtain the dependency $B_{\star}^2 (B_{\star}/c_{\min})$, because the range of the optimal value function is $B_{\star}$ while the characteristic length of an optimal episode scales as $B_{\star}/c_{\min}$ in the worst case. This reasoning provides an intuitive support to our conjecture that the lower bound must contain a dependency on the characteristic length of an optimal episode which in the worst case scales as $B_{\star}/c_{\min}$. It remains an open question whether it is possible or not to construct a lower bound problem explicitly displaying the critical role of~$c_{\min}$. 
Finally, the bound of Thm.\,\ref{theorem_cmin} inherits a $\wt{O}(\epsilon^{-2})$ dependency, when $c_{\min}$ is considered as a positive constant. On the other hand, Thm.\,\ref{theorem_theta} can cope with very small (or even zero-valued) $c_{\min}$, yet the bound worsens to $\wt{O}(\epsilon^{-3})$, and the performance becomes \textit{restricted} to policies with not too large expected goal-reaching time (via the slack parameter $\theta$). This interesting behavior does not appear in the finite-horizon or discounted case (where the range of rewards has no influence on the rate in $\epsilon$), and it captures the key role of the minimum cost played in the behavior of the optimal goal-reaching policy: the more the minimum cost is allowed to be small, the longer the duration of the trajectory to reach the goal may be, thus the harder it is analysis-wise to control the trajectory variations of a policy between two models.



\newpage

\bibliography{bibliography.bib}

\appendix

\newpage{}
\onecolumn

\section{High-probability Event}

Here we characterize the high-probability event, denoted by $\mathcal{E}$. Throughout the remainder of the analysis, we will assume that $\mathcal{E}$ holds.

\begin{lemma}
\label{lemma_high_prob}
    Denote by $\mathcal{E}$ the event under which for any time step $t \geq 1$ and for any state-action pair $(s,a)$ and next state $s'$, it holds that
\begin{align}
    \abs{\widehat{p}_{t}(s' \vert s,a) - p(s' \vert s,a)} \leq 4 \sqrt{ \frac{\widehat{p}_{t}(s'\vert s,a)}{N^{+}_{t}(s,a)} \log\left(\frac{S A N^{+}_{t}(s,a)}{\delta} \right)} + \frac{28 \log\left(\frac{S A N^{+}_{t}(s,a)}{\delta} \right)}{N^{+}_{t}(s,a)},
\label{empirical_b_ineq}
\end{align}
where $N_t^{+}(s,a) := \max \{1,N_t(s,a) \} $ with $N_t$ the state-action counts accumulated up to (and including) time $t$. Then we have $\mathbb{P}(\mathcal{E}) \geq 1 - \delta$.
\end{lemma}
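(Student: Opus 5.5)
The bound \eqref{empirical_b_ineq} is an empirical Bernstein inequality for Bernoulli variables, combined with a union bound over state-action pairs, next states and sample counts. First I decouple it from the time index $t$. Under a generative model, for each $(s,a)$ the successive next-state samples are i.i.d.\ draws from $p(\cdot\vert s,a)$. I use the standard "stack of samples" construction: $\wh p_t(\cdot\vert s,a)$ is the empirical distribution of the first $n=N_t(s,a)$ draws of a fixed i.i.d.\ sequence attached to $(s,a)$. It then suffices to show that, with probability at least $1-\delta$, the inequality holds simultaneously for every $(s,a,s')$ and every $n\ge 1$, with $\wh p_n$ the empirical frequency after $n$ draws. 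The case $n=0$ is trivial, since $N^+=1$ and the right-hand side is at least $28\log(SA/\delta)\ge 1$ once $\log(SA/\delta)\ge 1/28$; this can be assumed, or absorbed into the constants.

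Fix $(s,a,s')$ and $n$. The indicators $\mathds 1\{X_i=s'\}$ are i.i.d.\ Bernoulli with mean $p:=p(s'\vert s,a)$ and variance $p(1-p)\le p$. I apply Bernstein's inequality with confidence $\delta_n$, which gives $\abs{\wh p_n-p}\le\sqrt{2p\,L_n/n}+2L_n/(3n)$ with $L_n=\log(2/\delta_n)$. To replace $p$ by $\wh p_n$, I solve this as a quadratic inequality in $\sqrt p$: it gives $\sqrt p\le\sqrt{\wh p_n}+c\sqrt{L_n/n}$, and substituting back yields $\abs{\wh p_n-p}\le c_1\sqrt{\wh p_n L_n/n}+c_2L_n/n$. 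Alternatively, I could invoke the empirical Bernstein bound of Maurer and Pontil, using that the empirical variance is at most $\wh p_n$. With the generous constants $4$ and $28$ there is plenty of slack.

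Next comes the union bound. I choose $\delta_n=\delta/(c\,S^2A\,n^2)$ or similar, so that $\sum_{s,a,s'}\sum_{n\ge1}\delta_n\le\delta$; here the number of next states is $S+1\le 2S$. This gives $L_n=\log(2cS^2An^2/\delta)$. The main (mild) obstacle is making this match the stated logarithm $\log(SAN^+/\delta)$, which lacks the extra factor $S$ and the square on $n$. I expect to handle it in one of two ways. The first option is to note $\log(S^2An^2/\delta)\le 2\log(SAn/\delta)$ and absorb the resulting factor $2$ into the constants $4$ and $28$. The second, if necessary, is a peeling argument over $n\in[2^k,2^{k+1})$ via a maximal (Freedman-type) inequality, which costs only $\log\log n$. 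Either route gives $\mathbb P(\mathcal E)\ge1-\delta$.
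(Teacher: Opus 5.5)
Your proposal is correct and takes essentially the same approach as the paper. The paper's proof only invokes the empirical Bernstein inequality with a union bound, citing \citet{cohen2020near} and \citet{improved_analysis_UCRL2B}; your Bernstein-plus-quadratic-inversion bound, union-bounded over $(s,a,s')$ and all sample counts $n$ and decoupled from $t$ by the stack-of-samples construction, is exactly the derivation behind that citation. The one caveat, which you already flagged, is that absorbing additive terms such as $\log(2c)$ into the constants $4$ and $28$ needs $\log(SA/\delta)$ bounded away from $0$. In the degenerate regime $S=A=1$ with $\delta$ close to $1$ (e.g.\ $n=1$, $p(s'\vert s,a)=1/2$), the stated inequality itself fails with probability one, so no proof can avoid this restriction.
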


\begin{proof}
The confidence intervals in Eq.\,\ref{empirical_b_ineq} are constructed using the empirical Bernstein inequality, which guarantees that $\mathbb{P}(\mathcal{E}) \geq 1 - \delta$, see e.g., \cite{cohen2020near, improved_analysis_UCRL2B}.
\end{proof}

\section{Extended Value Iteration for SSP} 
\label{app_EVI_SSP}

Here we briefly recall how to perform an extended value iteration (EVI) scheme tailored to SSP, as explained by~\citet{tarbouriech2019no}. Note that we leverage a Bernstein-based construction of confidence intervals, as done by e.g.,~\citet{improved_analysis_UCRL2B, cohen2020near}.

Formally, we consider as input any SSP-MDP instance $M$ (cf.\,Def.\,\ref{def:ssp.mdp.app}) with cost function $c$ lower bounded by $c_{\min} > 0$, a set $\mathcal{N}$ of samples collected so far (with corresponding state-action counters denoted by $N$) and a \VI precision level $\gammaVI > 0$. We now detail what the scheme $\EVI\left(\mathcal{N}, c, \gammaVI \right)$ does and outputs.

First it computes a set of plausible SSP-MDPs defined as $$\mathcal{M} := \{ \langle \mathcal{S}, \mathcal{A}, g, \widetilde{p}, c \rangle ~\vert ~ \widetilde{p}(g|g,a) =~1,~ \widetilde{p}(s'|s,a) \in \mathcal{B}(s,a,s'),~ \sum_{s' \in \cS'} \wt{p}(s' \vert s,a) = 1\},$$ where for any $(s,a) \in \cS \times \cA$, $\mathcal{B}(s,a,s')$ is a high-probability confidence set on the dynamics of the true SSP-MDP $M$. Specifically, we define the compact sets $\mathcal{B}(s,a,s') := [ \wh{p}(s' \vert s,a) - \beta(s,a,s'), \wh{p}(s' \vert s,a) + \beta(s,a,s')] \cap [0, 1]$, where $\beta(s,a,s')$ denotes the right hand side of Eq.\,\ref{empirical_b_ineq}. From Lem.\,\ref{lemma_high_prob} the choice of $\beta(s,a,s')$ guarantees that $M \in \mathcal{M}$ with probability at least $1-\delta$. 

Once $\mathcal{M}$ has been computed, the scheme applies extended value iteration (\EVI) to compute a policy with lowest optimistic value. Formally, it defines the extended optimal Bellman operator $\widetilde{\mathcal{L}}$ such that for any vector $\wt{v} \in \mathbb{R}^{S}$ and non-goal state $s \in \cS$,
\begin{align*}
    \wt{\mathcal{L}}\wt{v}(s) := \min_{a \in \cA} \Big\{ c(s,a) + \min_{\widetilde{p}\in \mathcal{B}(s,a)} \sum_{s' \in  \cS} \wt{p}(s' \vert s,a) \wt{v}(s') \Big\}.
\end{align*}
We consider an initial vector $\wt{v}_0 := 0$ and set iteratively $\wt{v}_{i+1} := \wt{\mathcal{L}} \wt{v}_{i}$. For the predefined \VI precision $\gammaVI > 0$, the stopping condition is reached for the first iteration $j$ such that $\norm{\wt{v}_{j+1} - \wt{v}_{j} }_{\infty} \leq \gammaVI$. The policy $\wt{\pi}$ is then selected to be the optimistic greedy policy w.r.t.\,the vector $\wt{v}_j$. While $\wt{v}_j$ is not the value function of $\wt{\pi}$ in the optimistic model $\wt{p}$, which we denote by $\wt{V}^{\wt{\pi}}$, both quantities can be related according to Lem.\,\ref{lem:optimism}.

\section{Useful Results}
\label{app_useful_results}

\begin{proof}\textbf{of Lem.\,\ref{lemma_simulation_ssp}} ~ Although we consider here a slightly different $\mathcal{P}_{\eta}^{(p)}$ set, the result is almost identical to \citet[][App.\,C]{tarbouriech2020improved}. For completeness, we report the full derivation here. The analysis follows the proof of \citet[][Lem.\,B.4]{cohen2020near} whose result can be seen as a special case of Lem.\,\ref{lemma_simulation_ssp}. First, let us assume that $\pi$ is proper in the model $p'$. This implies that its value function, denoted by $V'$, is bounded component-wise. Moreover, for any state $s \in \cS$, the Bellman equation holds as follows
\begin{align}
    V'(s) &= c(s, \pi(s)) + \sum_{y \in \cS} p'(y \vert s, \pi(s)) V'(y) \nonumber \\
    &= c(s, \pi(s)) + \sum_{y \in \cS} p(y \vert s, \pi(s)) V'(y) + \sum_{y \in \cS} \left( p'(y \vert s, \pi(s)) - p(y \vert s, \pi(s)) \right) V'(y).
\label{eq_sim_lemma}
\end{align}
By successively using H\"older's inequality and that $p' \in \mathcal{P}_{\eta}^{(p)}$ and $c(s, \pi(s)) \geq c_{\min}$, we get
\begin{align*}
    V'(s) \geq c(s, \pi(s)) - \eta \norm{V'}_{\infty} + p(\cdot \vert s, \pi(s)) ^\top V' \geq c(s, \pi(s)) \Big(1 - \frac{\eta \norm{V'}_{\infty}}{c_{\min}}\Big) + p(\cdot \vert s, \pi(s)) ^\top V'.
\end{align*}
Let us now introduce the vector $V'' := \left(1 - \frac{\eta \norm{V'}_{\infty}}{c_{\min}}\right)^{-1} V'$. Then for all $s \in \cS$,
\begin{align*}
    V''(s) \geq c(s,\pi(s)) + p(\cdot \vert s,\pi(s)) ^\top V''.
\end{align*}
Hence, from Lem.\,\ref{lemma_technical_lemma_ssp}, $\pi$ is proper in $p$ (i.e., $V < + \infty$), and we have
\begin{align}
    V \leq V'' \leq \left(1 + 2 \frac{\eta \norm{V'}_{\infty}}{c_{\min}} \right) V',
    \label{eq_sim_lemma_1}
\end{align}
where the last inequality stems from condition \eqref{eq_key_sim_lemma_ssp} and the fact that $\frac{1}{1-x} \leq 1+2x$ holds for any $0 \leq x \leq \frac{1}{2}$. Conversely, analyzing Eq.\,\ref{eq_sim_lemma} from the other side, we get
\begin{align*}
    V'(s) \leq c(s, \pi(s)) \left( 1 + \frac{\eta \norm{V'}_{\infty}}{c_{\min}} \right) + p(\cdot \vert s, \pi(s)) ^\top V'.
\end{align*}
Let us now introduce the vector $V'' := \left(1 + \frac{\eta \norm{V'}_{\infty}}{c_{\min}}\right)^{-1} V'$. Then
\begin{align*}
    V''(s) \leq c(s,\pi(s)) + p(\cdot \vert s,\pi(s)) ^\top V''.
\end{align*}
We then obtain in the same vein as Lem.\,\ref{lemma_technical_lemma_ssp} (by leveraging the monotonicity of the Bellman operator $\mathcal{L}^{\pi}U(s) := c(s,\pi(s)) + p(\cdot \vert s,\pi(s)) ^\top U$) that $V'' \leq V$, and therefore
\begin{align}
    V' \leq \left(1 + \frac{\eta \norm{V'}_{\infty}}{c_{\min}} \right) V.
    \label{eq_sim_lemma_2}
\end{align}
Combining Eq.\,\ref{eq_sim_lemma_1} and \ref{eq_sim_lemma_2} yields component-wise
\begin{align*}
    \norm{V - V'}_{\infty} \leq 2 \frac{\eta \norm{V'}_{\infty}}{c_{\min}} \norm{V'}_{\infty} + \frac{\eta \norm{V'}_{\infty}}{c_{\min}} \norm{V}_{\infty} \leq 7 \frac{\eta \norm{V'}_{\infty}^2}{c_{\min}},
\end{align*}
where the last inequality stems from plugging condition \eqref{eq_key_sim_lemma_ssp} into Eq.\,\ref{eq_sim_lemma_1}.

Note that here $p$ and $p'$ play symmetric roles; we can perform the same reasoning in the case where $\pi$ is proper in the model $p$ and it would yield an equivalent result by switching the dependencies on $V$ and $V'$.
\end{proof}
\begin{lemma}[\citealp{bertsekas1991analysis}, Lem.\,1]
    Consider an SSP instance under the conditions of Lem.\,\ref{lemma_wellposedproblem}. Let $\pi$ be any policy, then
    \begin{itemize}
        \item If there exists a vector $U: \cS \rightarrow \mathbb{R}$ such that $U(s) \geq c(s, \pi(s)) + \sum_{s' \in \cS} p(s' \vert s, \pi(s)) U(s')$ for all $s \in \cS$, then $\pi$ is proper, and $V^{\pi}$ the value function of $\pi$ is upper bounded by $U$ component-wise, i.e., $V^{\pi}(s ) \leq U(s)$ for all $s \in \cS$.
        \item If $\pi$ is proper, then its value function $V^{\pi}$ is the unique solution to the Bellman equations $V^{\pi}(s ) = c(s, \pi(s)) + \sum_{s' \in \cS} p(s' \vert s, \pi(s)) V^{\pi}(s' )$ for all $s \in \cS$.
    \end{itemize}
\label{lemma_technical_lemma_ssp}
\end{lemma}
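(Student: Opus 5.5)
The statement to prove is Lem.\,\ref{lemma_technical_lemma_ssp}, a classical result of \citet{bertsekas1991analysis}. I would prove it directly from the monotonicity of the policy Bellman operator $\mathcal{L}^{\pi}U(s) := c(s,\pi(s)) + p(\cdot \vert s,\pi(s))^\top U$, writing $P_\pi \in \mathbb{R}^{S\times S}$ for the substochastic transition matrix restricted to non-goal states and $c_\pi$ for the cost vector. Iterating, $(\mathcal{L}^{\pi})^k U = \sum_{t=0}^{k-1} P_\pi^t c_\pi + P_\pi^k U$. The first sum is exactly the expected cost of the first $k$ steps, denoted $V_k^\pi$, and it increases monotonically to $V^\pi$ because $c\geq 0$.

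\textbf{First item.} From $U \geq \mathcal{L}^{\pi}U$ and monotonicity, $U \geq (\mathcal{L}^{\pi})^k U = V_k^\pi + P_\pi^k U$ for every $k$. The main obstacle is that $U$ may have negative entries, so $P_\pi^k U$ cannot simply be dropped. I would proceed in three steps.
\begin{enumerate}
\item Use the standing assumption of Lem.\,\ref{lemma_wellposedproblem}: every improper policy has $V^{\pi}(s)=+\infty$ for some $s$.
\item Note that $P_\pi^k U \geq -\norm{U}_\infty \mathbf{1}$. Hence $V_k^\pi \leq U + \norm{U}_\infty \mathbf{1}$ uniformly in $k$, and letting $k\to\infty$ gives $V^\pi<+\infty$ componentwise. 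By the assumption in step 1, $\pi$ is proper.
\item Properness means the goal is reached with probability one from every state. Since the state space is finite, this implies $P_\pi^k \to 0$ (indeed geometrically). Therefore $P_\pi^k U \to 0$, and passing to the limit yields $V^\pi \leq U$.
\end{enumerate}

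\textbf{Second item.} If $\pi$ is proper, then $P_\pi^k\to 0$, so the spectral radius of $P_\pi$ is below $1$ and $I-P_\pi$ is invertible.
\begin{itemize}
\item Existence: $V^\pi=\sum_t P_\pi^t c_\pi = (I-P_\pi)^{-1}c_\pi$ is finite, and it satisfies $V^\pi = c_\pi + P_\pi V^\pi$ by a one-step conditioning (Markov) argument.
\item Uniqueness: if $W$ also solves $W = c_\pi + P_\pi W$, then $V^\pi-W = P_\pi(V^\pi-W)$. Iterating gives $V^\pi-W = P_\pi^k(V^\pi-W)\to 0$, so $W=V^\pi$.
\end{itemize}

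The one genuinely delicate point is the step in the first item where we deduce properness from a finite value. It relies essentially on the hypothesis that improper policies incur infinite cost from some state. Without it, a zero-cost cycle would allow $U$ to satisfy the inequality while $\pi$ is improper. This hypothesis is exactly what the positivity $c_{\min}>0$ supplies in the paper's uses of the lemma.
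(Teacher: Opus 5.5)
Your proof is correct. The paper gives no proof of its own and simply imports the result from \citet{bertsekas1991analysis}; your argument is essentially the standard one behind that citation. Monotonicity of $\mathcal{L}^{\pi}$ gives $U \geq \sum_{t<k} P_\pi^t c_\pi + P_\pi^k U$, boundedness of $P_\pi^k U$ together with the infinite-cost hypothesis on improper policies forces properness, and $P_\pi^k \to 0$ then yields $V^\pi \leq U$ as well as existence and uniqueness in the second item. Your use of $c \geq 0$ for the monotone limit $V_k^\pi \uparrow V^\pi$ is legitimate here, since costs lie in $[0,1]$.
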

We now state a useful result which showcases the exponential decay of the goal-reaching probability of a proper policy with component-wise bounded value function.

\begin{lemma}[\citealp{cohen2020near}, Lem.\,B.5]\label{lemma_quantify_goal_reaching_probability}
Let $\pi$ be a proper policy such that for some $d > 0$, $V^{\pi}(s) \leq d$ for every non-goal state $s$. Then the probability that the cumulative cost of $\pi$ to reach the goal state from any state $s$ is more than $m$, is at most $2 e^{-m/(4 d)}$ for all $m \geq 0$. Note that a cost of at most $m$ implies that the number of steps is at most $m / c_{\min}$.
\end{lemma}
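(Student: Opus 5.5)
The statement to prove is Lem.\,\ref{lemma_quantify_goal_reaching_probability}: for a proper $\pi$ with $V^{\pi}\le d$ componentwise, $\mathbb{P}(\text{cost to goal} > m)\le 2e^{-m/(4d)}$. The plan is a Markov inequality combined with a restart (strong Markov) argument. Fix any starting state $s$ and write $C(s)$ for the cumulative cost of $\pi$ before reaching $g$. Since $\mathbb{E}[C(s)]=V^{\pi}(s)\le d$, Markov's inequality gives $\mathbb{P}(C(s)\ge 2d)\le 1/2$, and this holds uniformly over $s\in\cS$.

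Next we chain this bound over successive cost blocks. Define stopping times $T_0=1$ and, for $k \ge 1$, let $T_k$ be the first time $t$ at which the cost accumulated since $T_{k-1}$ (including step $t$) exceeds $2d$, stopping earlier if $g$ is reached. Per-step costs lie in $[0,1]$, so the cost accumulated up to time $T_k$ is at most $k(2d+1)$. On the event that $g$ has not been reached by $T_k$, the strong Markov property applies at $T_k$: the process restarts from the non-goal state $s_{T_k}$ under the same stationary policy $\pi$. The conditional probability of accumulating cost more than $2d$ before the goal is therefore at most $1/2$. By induction, $\mathbb{P}(\text{goal not reached while cost} \le k(2d+1)) \le 2^{-k}$, or equivalently $\mathbb{P}(C(s) > k(2d+1)) \le 2^{-k}$.

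Finally we convert this to the stated form. Given $m\ge 0$, take $k=\lfloor m/(2d+1)\rfloor$, which gives $\mathbb{P}(C(s)>m)\le 2^{-k}\le 2\cdot 2^{-m/(2d+1)}$. It remains to check that $2^{-m/(2d+1)} \le e^{-m/(4d)}$, i.e.\ $\ln 2/(2d+1)\ge 1/(4d)$. This is equivalent to $4d\ln 2\ge 2d+1$, which holds when $d\ge 1/(4\ln2-2)\approx 1.3$.

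The main obstacle is the overshoot from the last step in each block when $d$ is small. To avoid it, I would let block $k$ end at the last time the cumulative cost is still $\le 2dk$, rather than at the first time it exceeds the threshold. A cleaner alternative is Markov at threshold $2d$ on the residual cost. Specifically, $\mathbb{P}(C > 2d(k+1)\mid C>2dk)$ is bounded by applying Markov to the residual cost from the state at the crossing time, using $\mathbb{E}[\text{residual}\mid\mathcal{F}]\le d$. Along the lines of Cohen et al., this gives a clean $2^{-k}$ bound at multiples of $2d$ without overshoot. Then $\mathbb{P}(C>m)\le 2^{-\lfloor m/(2d)\rfloor}\le 2\cdot 2^{-m/(2d)}\le 2e^{-m/(4d)}$, since $\ln 2/2 \ge 1/4$. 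The final remark of the lemma is immediate: every step costs at least $c_{\min}$, so a total cost of at most $m$ forces at most $m/c_{\min}$ steps.
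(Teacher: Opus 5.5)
Your final argument is correct and is essentially the standard proof of Cohen et al.\ that the paper cites without reproducing: apply Markov's inequality at threshold $2d$ to the residual cost-to-go from the state where the cumulative cost first exceeds $2dk$ (counting that step's cost in the residual, so there is no overshoot), then iterate via the strong Markov property to get $\mathbb{P}(C>m)\le 2^{-\lfloor m/(2d)\rfloor}\le 2e^{-m/(4d)}$. Your first block construction with the $2d+1$ overshoot only works for $d\geq 1/(4\ln 2-2)$ and is superseded by this argument, so you can drop it.
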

We finally spell out an important property stemming from optimism.

\begin{lemma}\label{useful_lemma_bound_vtilde_delta}
    Under the event $\mathcal{E}$, we have $\wt{V} \leq V^{\star} + \frac{\epsilon}{3}$ component-wise.
\end{lemma}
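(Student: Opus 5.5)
Here $\wt{V}$ denotes $\wt{V}^{\wt\pi}$, the value of the returned policy $\wt\pi$ in the optimistic model $\wt p$, computed at the final iteration of \texttt{SEARCH} (the one passing the test $\norm{\wt v}_\infty \leq \Delta$). The plan is to chain the inequalities of Lem.\,\ref{lem:optimism} and use the choice $\gammaVI = \frac{\iota\epsilon}{6\Delta}$ (with $\iota = c_{\min}$ in Alg.\,\ref{algo_1}) to turn the multiplicative factor into an additive $\epsilon/3$.

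\textbf{Step 1 (optimism).} Under $\mathcal{E}$ (Lem.\,\ref{lemma_high_prob}) the true model $p$ lies in the confidence set $\mathcal{M}$ used by \EVI. Lem.\,\ref{lem:optimism} then gives, component-wise, $\wt v \leq V^\star$.

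\textbf{Step 2 (value of $\wt\pi$ versus $\wt v$).} Check the precondition of Lem.\,\ref{lem:optimism}: since $\epsilon \leq 1$ and $\Delta \geq 1$, we have $\gammaVI = \frac{c_{\min}\epsilon}{6\Delta} \leq \frac{c_{\min}}{6} \leq \frac{c_{\min}}{2}$. Hence
\[
\wt V \leq \Big(1+\frac{2\gammaVI}{c_{\min}}\Big)\wt v = \wt v + \frac{\epsilon}{3\Delta}\,\wt v .
\]

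\textbf{Step 3 (stopping test).} At termination $\norm{\wt v}_\infty \leq \Delta$. Since costs are nonnegative and $\wt v_0 = 0$, the iterates satisfy $\wt v \geq 0$. Therefore $\frac{\epsilon}{3\Delta}\wt v \leq \frac{\epsilon}{3}$ component-wise. Combining with Steps 1 and 2 gives
\[
\wt V \leq \wt v + \frac{\epsilon}{3} \leq V^\star + \frac{\epsilon}{3}.
\]

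The only delicate point is that the inequalities must be applied at the iteration where the test succeeds, so that the current $\Delta$ bounds $\norm{\wt v}_\infty$ and also appears in $\gammaVI$. The argument does not need $\Delta \leq 2B_\star$. For \texttt{SEARCH}$(c\lor\nu)$ in Alg.\,\ref{algo_2}, the same argument goes through with $c_{\min}$ replaced by $\iota$ and $V^\star$ by the optimal value under the perturbed cost.
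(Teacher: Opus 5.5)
Your proposal is correct and follows essentially the paper's proof: Lem.\,\ref{lem:optimism} gives $\wt V \leq (1+2\gammaVI/c_{\min})\,\wt v = \wt v + \frac{\epsilon}{3\Delta}\wt v$, optimism gives $\wt v \leq V^\star$, and the stopping test $\norm{\wt v}_\infty \leq \Delta$ bounds the extra term by $\epsilon/3$. Your explicit checks that $\gammaVI \leq c_{\min}/2$ (using $\epsilon \leq 1$ and $\Delta \geq 1$) and that $\wt v \geq 0$ are details the paper leaves implicit.
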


\begin{proof}
Denote by $\wt{v}$ the \VI vector output by the computation of the candidate policy~$\wt{\pi}$ via \EVI. From Lem.\,\ref{lem:optimism} and by the choice of the \VI precision $\gammaVI := \frac{\epsilon c_{\min}}{6 \Delta}$, we have component-wise that $\wt{V} \leq \left( 1 + \frac{2\gammaVI}{c_{\min}}\right) \wt{v} \leq V^{\star} + \frac{\epsilon}{3 \Delta} \wt{v} \leq V^{\star} + \frac{\epsilon}{3}$ since $\wt{v} \leq \Delta$ by construction of Alg.\,\ref{algo_while_loop}.
\end{proof}


\subsection{Procedure to Estimate an Upper Bound of the SSP-Diamater}
\label{app_est_diam}

\begin{lemma}[\textsc{$D$-subroutine}]\label{lemma_delta_subroutine}
    With probability at least $1-\delta$, the \textsc{$D$-subroutine} (Alg.\,\ref{algo:subroutine}):
    \begin{itemize}[leftmargin=.2in,topsep=-4pt,itemsep=0pt,partopsep=0pt, parsep=0pt]
\item has a sample complexity bounded by $\wt{O}\left( D^2 \Gamma S A / \epsilon^2 + D S^2 A / \epsilon \right)$,
\item requires at most $\log_2\left(D(1+\epsilon)\right) + 1$ inner iterations,
\item outputs a quantity $\wh{D}$ that verifies $D \leq \wh{D} \leq \left( 1 + 2 \epsilon (1+\epsilon) \right) (1+\epsilon) D$.
\end{itemize}
\end{lemma}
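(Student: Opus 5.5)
The \textsc{$D$-subroutine} (Alg.\,\ref{algo:subroutine}) is not shown before the statement, so I assume the natural design: run \texttt{SEARCH} on the unit cost function $c' \equiv 1$, with allocation $\phi(\Delta,1)$ and \VI precision $\gammaVI = \epsilon/(6\Delta)$ (or the $\epsilon$-scaled analogue). Doubling stops at the first $\Delta$ with $\norm{\wt v}_\infty \le \Delta$, and the output is $\wh D$, a suitably inflated version of $\norm{\wt V^{\wt \pi}}_\infty$ (e.g.\ $(1+\epsilon)\Delta$ or $(1+2\gammaVI)\norm{\wt v}_\infty$ rescaled).

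The key observation is that with unit costs, $c_{\min}=1$ and $V^{\pi}(s) = \mathbb{E}[\tau_\pi(s)]$. Hence $V^\star(s) = D_s$ and $B_\star = \norm{V^\star}_\infty = D$. Every earlier result for Alg.\,\ref{algo_1} then transfers with $B_\star$ replaced by $D$ and $c_{\min}=1$.

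The plan has three steps, all on the event $\mathcal E$ of Lem.\,\ref{lemma_high_prob}.

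\textbf{Step 1 (number of iterations).} By Lem.\,\ref{lem:optimism}, $\wt v \le V^\star \le D$ componentwise, so the test $\norm{\wt v}_\infty \le \Delta$ passes as soon as $\Delta \ge D$. Since $\Delta$ takes the values $1,2,4,\dots$, the loop ends at some $\Delta \le 2D$. If the test uses the slightly inflated quantity, the threshold becomes $D(1+\epsilon)$, which gives at most $\log_2(D(1+\epsilon))+1$ iterations.

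\textbf{Step 2 (sample complexity).} Since $\phi$ is increasing in $\Delta$, the total number of samples is $SA\,\phi(\Delta_{\rm final},1)$ up to a constant factor from the geometric sum. Plugging $\Delta \le 2D(1+\epsilon)$ and $y=1$ into the first two terms of Eq.\,\ref{alloc_function} gives $\wt O(D^3\Gamma SA/\epsilon^2 + D^2S^2A/\epsilon)$. This is one power of $D$ too many compared with the claim. So the subroutine presumably works with a \emph{relative} accuracy $\epsilon D$ rather than an absolute accuracy $\epsilon$, i.e.\ it calls $\phi$ with accuracy $\epsilon\Delta$. Substituting that removes one factor of $\Delta$ from each term and yields $\wt O(D^2\Gamma SA/\epsilon^2 + D S^2 A/\epsilon)$. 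The third term of $\phi$ is lower order once $\epsilon\le 1$, because it does not carry the $\epsilon^{-2}$ factor. I will set up the allocation this way.

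\textbf{Step 3 (two-sided bound on $\wh D$).} For the lower bound, optimism gives $\wt v \le V^\star$ but not $\ge$. I therefore take $\wh D$ to be built from $\norm{\wt V^{\wt\pi}}_\infty$ and then apply the converse direction of the argument. The simulation/variance analysis underlying Thm.\,\ref{theorem_cmin}, with target accuracy $\epsilon\Delta$, gives $\norm{V^{\wt\pi} - \wt V^{\wt\pi}}_\infty \le \epsilon\Delta \le 2\epsilon(1+\epsilon)D$.
\begin{itemize}
\item Lower bound: $D \le \norm{V^{\wt\pi}}_\infty \le \norm{\wt V^{\wt\pi}}_\infty + 2\epsilon(1+\epsilon)D$. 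Inflating the output by the factor $(1+2\epsilon(1+\epsilon))$, or an equivalent additive correction, then gives $\wh D \ge D$.
\item Upper bound: Lem.\,\ref{lem:optimism} gives $\wt V^{\wt\pi} \le (1+2\gammaVI)\wt v \le (1+\epsilon)V^\star \le (1+\epsilon)D$. Multiplying by the inflation factor gives $\wh D \le (1+2\epsilon(1+\epsilon))(1+\epsilon)D$.
\end{itemize}

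The main obstacle is Step 3's lower bound. It requires controlling the gap between the true and optimistic values of $\wt\pi$ at relative accuracy $\epsilon\Delta$, which in turn requires properness of $\wt\pi$ in $p$. I would obtain this exactly as in the proof sketch of Thm.\,\ref{theorem_cmin}: first apply Lem.\,\ref{lemma_simulation_ssp} with $c_{\min}=1$ to get properness and $\norm{V^{\wt\pi}}_\infty = O(D)$. Then use the expansion \eqref{eq_important_sum} together with the exponential decay of Lem.\,\ref{lemma_quantify_goal_reaching_probability} to show that the chosen $n$ makes the error at most $\epsilon\Delta$.
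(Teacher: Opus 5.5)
\paragraph{The lower bound $\wh D \ge D$ does not follow.} Your Step 3 derives $D \le \norm{V^{\wt\pi}}_\infty \le \norm{\wt V^{\wt\pi}}_\infty + xD$ with $x := 2\epsilon(1+\epsilon)$. That only gives $\norm{\wt V^{\wt\pi}}_\infty \ge (1-x)D$, so inflating by the factor $(1+x)$ yields $(1-x^2)D < D$. The ``equivalent additive correction'' $xD$ is not available either, because $D$ is unknown.

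Within your own design, a computable additive term works. Take $\wh D := (1+2\gammaVI)\norm{\wt v}_\infty + \epsilon\Delta$. Then $\wh D \ge \norm{\wt V^{\wt\pi}}_\infty + \epsilon\Delta \ge \norm{V^{\wt\pi}}_\infty \ge D$. Using $\Delta \le 2D$, also $\wh D \le (1+\epsilon)D + 2\epsilon D \le (1+2\epsilon(1+\epsilon))(1+\epsilon)D$. As written, however, the third bullet is not proved.

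\paragraph{You analyze a different procedure.} Alg.~\ref{algo:subroutine} does not call \texttt{SEARCH} or $\phi$. It doubles $W$ and sets $\eta := \epsilon/W$. It samples until $\wh p \in \mathcal{P}^{(p)}_{\eta/2}$, so that $\wt p \in \mathcal{P}^{(p)}_{\eta}$. It then runs \EVI with unit costs and $\gammaVI = \epsilon/2$. Its output is a multiplicatively inflated $\norm{\wt v}_\infty$; in the proof this is $(1+2\eta(1+\epsilon)\norm{\wt v}_\infty)(1+\epsilon)\norm{\wt v}_\infty$.

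\paragraph{The idea you are missing.} Lem.~\ref{lemma_simulation_ssp} with $c_{\min}=1$ is already multiplicative, and its inequality points in the needed direction with a known factor. So no variance-aware analysis is required. Take $s_1$ with $D_{s_1} = D$. The paper chains three steps:
\begin{itemize}
\item $D \le \mathbb{E}[\tau_{\wt\pi}(s_1)] \le (1+2\eta\norm{\wt{\mathbb{E}}[\tau_{\wt\pi}]}_\infty)\,\wt{\mathbb{E}}[\tau_{\wt\pi}(s_1)]$, by the simulation lemma;
\item $\wt{\mathbb{E}}[\tau_{\wt\pi}] \le (1+\epsilon)\wt v$, by Lem.~\ref{lem:optimism} with $\gammaVI = \epsilon/2$;
\item $\wt v \le D$, by optimism.
\end{itemize}
The stopping rule $\norm{\wt v}_\infty \le W$ gives $\eta\norm{\wt v}_\infty \le \epsilon$. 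This both verifies condition \eqref{eq_key_sim_lemma_ssp} and yields the upper bound on $\wh D$. Separately, $W_{j-1} < \norm{\wt v_{j-1}}_\infty \le D$ gives $\eta \ge \epsilon/(2D)$. Reaching $L_1$-accuracy $\eta$ via the Bernstein bound therefore costs $\wt O(\Gamma D^2/\epsilon^2 + SD/\epsilon)$ samples per pair. That is exactly the stated complexity: the $D^2/\epsilon^2$ is the $1/\eta^2$ price of this cruder but much shorter argument.

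\paragraph{On your route.} Re-running the interval/variance analysis of Thm.~\ref{theorem_cmin} at target accuracy $\epsilon\Delta$ is far heavier. If completed, it would give $\wt O(\Gamma D/\epsilon^2 + SD/\epsilon + \Gamma D^2)$ samples per pair. Note that substituting $\epsilon \to \epsilon\Delta$ removes two powers of $\Delta$ from the leading term of $\phi$, not one, and leaves the third term unchanged. That bound fits inside the claim, but only after $\wh D$ is redefined additively as above.
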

\begin{algorithm2e}[t!]
   \caption{\textsc{$D$-Subroutine}}
    \label{algo:subroutine}
  \KwIn{ accuracy $\epsilon > 0$, confidence level $\delta \in (0,1)$.} 
  Set $W := \tfrac{1}{2}$ and $\norm{\wt{v}}_{\infty} := 1$. \\
  \While{$\norm{\wt{v}}_{\infty} > W$}{
  Set $W \leftarrow 2 W$.\\
  Set the accuracy $\eta := \frac{\epsilon}{W}$.\\
  Collect additional samples until $\wh p \in \mathcal{P}_{\eta / 2}$ with confidence level $\delta$ (we verify this using the Bernstein upper bound of Eq.\,\ref{empirical_b_ineq}) \\
  Compute $(\wt{v},\_) := \EVI\left(\mathcal{N}, c=1, \gammaVI := \frac{ \epsilon}{2}\right)$.
  }
  \KwOut{the optimistic quantity $\wh D := \left( 1 + 2 \eta \norm{\wt{v}}_{\infty} \right) \norm{\wt{v}}_{\infty}$.}
\end{algorithm2e}
We now delve into the analysis of the \textsc{$\wh D$-subroutine}. Throughout the remainder of the proof, we will assume that the event $\mathcal{E}$ holds. We now give a useful statement stemming from optimism.

\begin{lemma}\label{lemma_useful}
At any stage of the \textsc{$\wh D$-subroutine}, denote by $\wt{v}$ the vector computed using \EVI for SSP (App.\,\ref{app_EVI_SSP}). Then under the event $\mathcal{E}$, we have component-wise (i.e., starting from any non-goal state) that $\wt{v} \leq \min_{\pi} V^{\pi}_p \leq D$.
\end{lemma}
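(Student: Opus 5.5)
The plan is to apply the optimism property of Lem.\,\ref{lem:optimism} in the unit-cost SSP-MDP used by the subroutine, and then to relate the optimal value under unit costs to the SSP-diameter.

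First, observe that \EVI in the \textsc{$\wh D$-subroutine} is always called with the cost function $c=1$. Hence $c_{\min}=1>0$, and the conditions of Lem.\,\ref{lemma_wellposedproblem} hold. By Asm.\,\ref{asm_proper} there is a proper policy, and every improper policy accumulates unit cost forever with positive probability from some state. Such a policy therefore has an infinite value there. It follows that the optimal value in $p$ with unit costs, $V^\star_{c=1}=\min_{\pi} V^{\pi}_p$, is finite and attained by a stationary deterministic proper policy.

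Second, invoke Lem.\,\ref{lem:optimism}. Under the event $\mathcal{E}$ we have $p\in\mathcal{M}$ by Lem.\,\ref{lemma_high_prob}. The lemma then gives $\wt v\le V^\star_{c=1}=\min_\pi V^\pi_p$ component-wise. If one wants to see this directly: since $\wt v_0=0$ and $\wt{\mathcal{L}}U\le \mathcal{L}U$ for every $U$ (because $p$ is feasible in the inner minimization), monotonicity of both operators gives by induction $\wt v_i\le \mathcal{L}^i 0\le V^\star_{c=1}$. The last inequality uses $0\le V^\star_{c=1}$, $\mathcal{L}V^\star_{c=1}=V^\star_{c=1}$ and monotonicity.

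Third, identify the unit-cost value with expected hitting time. With $c\equiv 1$ we have $V^\pi_p(s)=\mathbb{E}[\tau_\pi(s)]$, so $\min_\pi V^\pi_p(s)=D_s\le D$ by Eq.\,\ref{definition_SSP_diameter}. This holds at every $s$ because the same stationary optimal policy minimizes all components simultaneously.

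The only delicate point is whether the stated "with high probability" in Lem.\,\ref{lem:optimism} really reduces to the event $\mathcal{E}$, and whether the required precision condition is met. The lower bound $\wt v\le V^\star$ does not need any condition on $\gammaVI$, so the choice $\gammaVI=\epsilon/2$ in the subroutine is harmless. I would therefore rely on the direct induction argument above, which only uses $p\in\mathcal{M}$.
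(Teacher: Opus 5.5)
Your proof is correct and follows the paper's argument. The first inequality is optimism from Lem.~\ref{lem:optimism}, and your direct induction $\wt v_i \le \mathcal{L}^i 0 \le V^\star$, using only that $p$ is feasible under $\mathcal{E}$, is a valid self-contained justification of it. The second inequality follows because unit costs give $V^\pi_p(s)=\mathbb{E}[\tau_\pi(s)]$, hence $\min_\pi V^\pi_p(s) = D_s \le D$.
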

\begin{proof}
    The first inequality stems from Lem.\,\ref{lem:optimism} while the second inequality uses the definition of the SSP-diameter $D$ and the fact that the considered costs are equal to 1.
\end{proof}
We now prove Lem.\,\ref{lemma_delta_subroutine}. Denote by $i$ the iteration index of the subroutine (starting at $i=1$), so that $W_i = 2^i$. Introduce $j := \min \{i \geq 1: \norm{\wt{v}_i}_{\infty} \leq W_i \}$. By choice of each optimistic model, we have $\norm{\wt{v}_i}_{\infty} \leq D$ at any iteration $i \geq 1$ from Lem.\,\ref{lemma_useful}. Since $(W_i)_{i \geq 1}$ is a strictly increasing sequence, the subroutine is bound to end in a finite number of iterations (i.e., $j < + \infty$), and given that $W_{j-1} \leq \norm{\wt{v}_{j-1}}_{\infty} \leq D$, we get $j \leq \log_2\left(D\right) + 1$. Moreover, we have $\norm{\wt{v}_{j}}_{\infty} \leq W_{j}$ and $\eta_j = \frac{\epsilon}{W_j}$, which implies that $\eta_j \leq \frac{\epsilon}{\norm{\wt{v}_{j}}_{\infty}}$. Moreover, combining $W_{j-1} \leq D$ and $W_{j-1} = \frac{W_j}{2} = \frac{\epsilon}{2 \eta_j}$ yields that $\frac{\epsilon}{2D} \leq \eta_j$. The Bernstein upper bound of Eq.\,\ref{empirical_b_ineq} entails that the total sample complexity is bounded by $\wt{O}\left( D^2 \Gamma S A / \epsilon^2 + D S^2 A / \epsilon \right)$. Now, denote by $\wt{v}$ the optimistic matrix output by the \textsc{$\wh D$-subroutine}. Let us consider $s_1 \in \arg\max_{s} \min_{\pi} \mathbb{E}\left[\tau_{\pi}(s) \right]$. Denote by $\wt{\pi}$ the greedy policy w.r.t.\,the vector $\wt{v}$ in the optimistic model. Then we have
\begin{align*}
    D = \min_{\pi} \mathbb{E}\left[\tau_{\pi}(s_1) \right] \leq \mathbb{E}\left[\tau_{\wt{\pi}}(s_1) \right] &\myineeqa \left( 1 + 2 \eta \norm{\mathbb{E}\left[\wt{\tau}_{\wt{\pi}}\right]}_{\infty} \right) \mathbb{E}\left[\wt{\tau}_{\wt{\pi}}(s_1 ) \right] \\
    &\myineeqb \left( 1 + 2 \eta (1+\epsilon) \norm{\wt{v}}_{\infty} \right) (1+\epsilon) \wt{v}(s_1 ) \\
    &\leq \left( 1 + 2 \eta (1+\epsilon) \norm{\wt{v}}_{\infty} \right) (1+\epsilon) \norm{\wt{v}}_{\infty} := \wh D \\
    &\myineeqc \left( 1 + 2 \eta (1+\epsilon) \norm{\wt{v}}_{\infty} \right) (1+\epsilon) D \\
    &\myineeqd \left( 1 + 2 \epsilon (1+\epsilon) \right) (1+\epsilon) D,
\end{align*}
where (a) corresponds to the simulation lemma for SSP (Lem.\,\ref{lemma_simulation_ssp}), (b) comes from the value iteration precision $\gammaVI := \frac{\epsilon}{2}$ which implies that $\mathbb{E}\left[\wt{\tau}_{\wt{\pi}}\right] \leq (1+2\gammaVI)\wt{v} \leq (1+\epsilon)\wt{v}$ component-wise according to Lem.\,\ref{lem:optimism}, (c) is implied by Lem.\,\ref{lemma_useful}, and finally (d) uses that $\eta \norm{\wt v}_{\infty} \leq \epsilon$ as proved above.


\newpage

\section{Proof of Thm.\,\ref{theorem_cmin}}
\label{investigation_minimum_value_n}

Here we provide the proof of Thm.\,\ref{theorem_cmin}. Denoting by the subscript $i$ the quantities considered at any iteration~$i$ of Alg.\,\ref{algo_while_loop}, recall that the algorithm terminates at the first iteration $i$ such that \mbox{$\norm{\wt{v}_i}_{\infty} \leq \Delta_i$}. This implies that at the previous iteration $i-1$, $\Delta_{i-1} < \norm{\tilde{v}_{i-1}}_{\infty}$. Also, $\Delta_{i} = 2 \Delta_{i-1}$ and $\norm{\tilde{v}_{i-1}}_{\infty} \leq B_{\star}$ from optimism. Combining everything gives $\Delta_{i} \leq 2B_{\star}$. Therefore when Alg.\,\ref{algo_2} terminates it is aware of a quantity $\Delta := \Delta_i$ such that $\norm{\wt{v}}_{\infty} \leq \Delta \leq 2 B_{\star}$.

We denote by $n$ the minimum number of samples collected at each state-action pair. We denote by $\wt{\pi}$ the candidate policy output by Alg.\,\ref{algo_1}. Let us denote by $V$ and $\wt{V}$ the value functions of policy $\wt{\pi}$ in the true model $p$ and the optimistic model $\wt{p}$, respectively (note that we may have $V = + \infty$ for some components if $\wt{\pi}$ is not proper in $p$).

Note that $p := p(\cdot \vert \cdot, \wt{\pi}(\cdot))$, $\wh{p}:= \wh{p}(\cdot \vert \cdot, \wt{\pi}(\cdot))$ and $\wt{p}:= \wt{p}(\cdot \vert \cdot, \wt{\pi}(\cdot))$ can be seen as matrices. Our analysis draws inspiration from variance-aware techniques, see e.g., \cite{azar2013minimax, azar2017minimax, improved_analysis_UCRL2B, cohen2020near}. We will make multiple use of the Cauchy-Schwartz inequality, for which we will use the symbol $\ineqcs$. We assume throughout that the event $\mathcal{E}$ holds. Finally, we introduce the (unknown) quantity $\Gamma := \max_{s,a} \norm{p(\cdot \vert s,a)}_{0}$, and its empirical counterpart $\wh \Gamma := \max_{s,a} \norm{\wh{p}(\cdot \vert s,a)}_{0}$ (note that we always have $\wh \Gamma \leq \Gamma$).

We first require to have $\wt{p} \in \mathcal{P}_{\eta}^{(p)}$ with accuracy $\eta = \frac{c_{\min}}{6 \Delta}$. To do so, we use the triangle inequality to write $\abs{ \wt{p} - p} \leq \abs{ \wt{p} - \wh{p}} + \abs{ \wh{p} - p}$. The second term is bounded by the empirical Bernstein inequality (Eq.\,\ref{empirical_b_ineq}), and the first term is bounded the same way by construction of \EVI. Hence, by inverting Eq.\,\ref{empirical_b_ineq} to extract $n$ and after some algebraic manipulations (i.e., by applying the technical lemma of \citealp[][Lem.\,8]{kazerouni2017conservative}), is it sufficient to require
\begin{empheq}[box=\fbox]{align}
  n = \Omega \left( \frac{\Delta^2 \wh{\Gamma}}{c_{\min}^2} \log^2\left( \frac{\Delta S A}{\delta c_{\min}}\right) + \frac{\Delta}{c_{\min}} \log\left( \frac{\Delta S A}{\delta c_{\min}}\right) \right).
\tag{$\alpha$}
\label{n_1}
\end{empheq}
The simulation lemma (Lem.\,\ref{lemma_simulation_ssp}) then ensures that $\wt{\pi}$ is proper in $p$, and moreover that its value function verifies $V \leq 2 \Delta$ component-wise by virtue of Lem.\,\ref{useful_lemma_bound_vtilde_delta}. Since $\wt{\pi}$ is proper in both $p$ and $\wt{p}$, the associated Bellman equations hold, thus entailing the following for any non-goal state $s$
\begin{align*}
    V(s) - \wt{V}(s) &= \sum_{y \in \cS} p(y \vert s) V(y) - \sum_{y \in \cS} \wt{p}(y \vert s) \wt{V}(y) \\
    &= \sum_{y \in \cS} p(y \vert s) (V(y) - \wt{V}(y)) + \sum_{y \in \cS} (p(y \vert s) - \wt{p}(y \vert s)) \wt{V}(y).
\end{align*}
Let us define
\begin{align*}
    W(s) := \sum_{y \in \cS} (p(y \vert s) - \wt{p}(y \vert s)) \wt{V}(y).
\end{align*}
Note that $W(g) = 0$. Denote by $Q \in \mathbb{R}^{S \times S}$ the transition matrix restricted between the non-goal states of policy $\wt \pi$ in the true model $p$, i.e., for any $(s,s') \in \cS^2, Q(s,s') := p(s' \vert s, \wt \pi(s))$. Since $\wt{\pi}$ is proper in $p$, the matrix $Q$ is strictly substochastic which implies that the matrix $(I-Q)$ is invertible, and therefore we have
\begin{align}
    V(s) - \wt{V}(s) &= \left[ (I - Q)^{-1} W \right]_s \nonumber \\
    &= \sum_{t=0}^{+ \infty} \mathbb{E}_{\wt{\pi},p} \Big[ \mathds{1}_{s_t \neq g} W(s_t) \quad \vert s_0 = s \Big] \nonumber.
\end{align}
First, let us consider that $V(s) \leq \wt{V}(s)$. Then from Lem.\,\ref{useful_lemma_bound_vtilde_delta} we immediately have that $V(s) \leq V^{\star}(s) + \frac{\epsilon}{3}$. From now on, we thus consider that $V(s) \geq \wt{V}(s)$. Hence we have
\begin{align}
    V(s) - \wt{V}(s) &\leq \sum_{t=0}^{+ \infty} \mathbb{E}_{\wt{\pi},p} \Big[ \mathds{1}_{s_t \neq g} \abs{W(s_t)} \quad \vert s_0 = s \Big]. \label{important_series}
\end{align}
From now on, for notational simplicity, we will omit the (implicit) dependency $s_0 = s$ for the expectations. We bound each term $\abs{W(s_t)}$. Given that $\wt{V}(g) = 0$ and both $p(\cdot \vert s)$ and $\wt{p}(\cdot \vert s)$ are probability distributions over $\mathcal{S}'$, the \say{shifting} trick (also performed by e.g., \citealp{improved_analysis_UCRL2B,jin2019adversarial,cohen2020near}) yields
\begin{align*}
    W(s_t) &=  \sum_{y \in \cS'} (p(y \vert s_t) - \wt{p}(y \vert s_t)) \left(\wt{V}(y) - \sum_{z \in \cS} p(z \vert s_t) \wt{V}(z)\right).
\end{align*}
In addition the empirical Bernstein inequality entails that there exist two absolute positive constants $c_1$ and $c_2$ such that $\abs{p(s' \vert s_t) - \wt{p}(s' \vert s_t )} \leq c_1 \sqrt{\frac{\wh{p}(s' \vert s_t)\log(S'A \delta^{-1} n)}{n}} + c_2 \frac{\log(S'A \delta^{-1} n)}{n}$ (see e.g., \citealp[][Thm.\,10]{improved_analysis_UCRL2B}). Recall that $S' = S+1$ amounts to the total number of states (i.e., the $S$ non-goal states plus the goal state $g$). Setting $Z(s_t) := \sum_{z \in \cS} p(z \vert s_t) \wt{V}(z)$, we have
\begin{align}
    \abs{W(s_t)} &\leq \sum_{s' \in \cS'}  \abs{\wt{V}(s') - Z(s_t)} \cdot \abs{p(s' \vert s_t) - \wt{p}(s' \vert s_t )} \nonumber \\
    &\leq c_1 \sum_{s' \in \cS'} \sqrt{ \frac{\wh{p}(s' \vert s_t) \left( \abs{\wt{V}(s') - Z(s_t)} \right)^2 \log(S'A \delta^{-1} n)}{n}} + 2c_2 \sum_{s' \in \cS'} \frac{\Delta \log(S'A \delta^{-1} n)}{n} \nonumber \\
    &\ineqcs c_1 \sqrt{\frac{\wh{\Gamma} \log(S'A \delta^{-1} n)}{n}} \sqrt{ \sum_{s' \in \cS'}  \wh{p}(s' \vert s_t) \left( \abs{\wt{V}(s') - Z(s_t)} \right)^2 } + 2c_2 \sum_{s' \in \cS'} \frac{\Delta\log(S'A \delta^{-1} n)}{n} \nonumber \\
    &\leq  c_1 \sqrt{\frac{  \log(S'A \delta^{-1} n)}{n}} \sqrt{  \left\vert \sum_{s' \in \cS'} \left( \wh{p}(s' \vert s_t) - p(s' \vert s_t) \right) 4 \Delta^2       \right\vert      }                     \nonumber \\
    &\quad + c_1 \sqrt{\frac{ \wh{\Gamma} \log(S'A \delta^{-1} n) \mathbb{V}(s_t) }{n}} + 2c_2 \frac{\Delta S' \log(S'A \delta^{-1} n)}{n}, \label{bound_Y}
\end{align}
where we use the subadditivity of the square root and define the following variance
\begin{align*}
    \mathbb{V}(s_t) := \sum_{s' \in \cS'} p(s' \vert s_t) \left( \wt{V}(s') - \sum_{s''\in \cS} p(s'' \vert s_t) \wt{V}(s'') \right)^2.
\end{align*}
Leveraging \eqref{n_1} which guarantees that $\wh{p} \in \mathcal{P}_{\eta}^{(p)}$ with accuracy $\eta = \frac{c_{\min}}{6 \Delta}$, the first term in Eq.\,\ref{bound_Y} can be bounded as $c_1 \sqrt{\frac{ \wh{\Gamma} \log(S'A \delta^{-1} n)}{n}} \Delta \sqrt{\frac{c_{\min}}{6 \Delta}}$. Consequently, plugging the bound of Eq.\,\ref{bound_Y} into Eq.\,\ref{important_series} yields
\begin{align*}
    V(s) - \wt{V}(s) &\leq  \text{\ding{202}} + \text{\ding{203}} + \text{\ding{204}},
\end{align*}
where
\begin{align*}
  \text{\ding{202}} &:= c_1 \sqrt{\frac{\wh{\Gamma} \log(S'A \delta^{-1} n)}{n}} \sum_{t=0}^{+ \infty} \mathbb{E}_{\wt{\pi},p} \left[ \mathds{1}_{s_t \neq g} \sqrt{\mathbb{V}(s_t)} \right], \\
  \text{\ding{203}} &:= c_1 \sqrt{\frac{ \wh{\Gamma} \log(S'A \delta^{-1} n)}{n}} \Delta \sqrt{\frac{c_{\min}}{6 \Delta}} \sum_{t=0}^{+ \infty} \mathbb{P}_{\wt{\pi},p} ( s_t \neq g ), \\
  \text{\ding{204}} &:= c_2 \frac{\Delta S' \log(S'A \delta^{-1} n)}{n} \sum_{t=0}^{+ \infty} \mathbb{P}_{\wt{\pi},p} ( s_t \neq g ).
\end{align*}
Leveraging that $V \leq 2 \Delta$ component-wise, we obtain that $\mathbb{P}_{\wt{\pi},p} ( s_t \neq g ) \leq 2 \exp\left( - \frac{c_{\min} t}{8 \Delta} \right)$ by applying Lem.\,\ref{lemma_quantify_goal_reaching_probability} with $m = c_{\min} t$. To make an analogy to the infinite-horizon discounted setting studied by \citet{azar2013minimax}, we can observe that we have $\mathbb{P}_{\wt{\pi},p} ( s_t \neq g ) \sim \gamma^t$ where $\gamma \sim \exp{\left(-\frac{c_{\min}}{\Delta}\right)} < 1$.
\begin{align*}
    \sum_{t=0}^{+ \infty} \mathbb{P}_{\wt{\pi},p} ( s_t \neq g ) \leq \frac{2}{1 - \exp\left( -\frac{c_{\min}}{8 \Delta}\right)} = \frac{2 \exp\left( \frac{c_{\min}}{8 \Delta}\right)}{\exp\left( \frac{c_{\min}}{8 \Delta}\right) - 1} \leq \frac{19 \Delta}{c_{\min}},
\end{align*}
where the last inequality uses that $e^x \geq 1 + x$ holds for any real $x$. Consequently, we get
\begin{align*}
    \text{\ding{204}} \leq \frac{19 c_2 \Delta^2 S' \log(S'A \delta^{-1} n)}{c_{\min} n}.
\end{align*}
We seek to ensure that $\text{\ding{204}} \leq \frac{2\epsilon}{9}$. There simply remains to invert the inequality above to extract $n$ and do some algebraic manipulations (see e.g., \citealp[][Lem.\,9]{kazerouni2017conservative}). We thus require that:
\begin{empheq}[box=\fbox]{align}
  n = \Omega \left(  \frac{\Delta^2 S}{c_{\min} \epsilon} \log\left( \frac{\Delta S A}{c_{\min} \epsilon \delta}\right) \right)
\tag{$\beta$}
\label{n_2}
\end{empheq}
Furthermore, we have
\begin{align*}
    \text{\ding{203}} \leq 19 c_1 \frac{\Delta}{c_{\min}} \sqrt{\frac{ \wh{\Gamma} \log(S'A \delta^{-1} n)}{n}} \Delta \sqrt{\frac{c_{\min}}{6 \Delta}}.
\end{align*}
We seek to ensure that $\text{\ding{203}} \leq \frac{2\epsilon}{9}$. There simply remains to invert the inequality above to extract $n$ and do some algebraic manipulations (see e.g., \citealp[][Lem.\,9]{kazerouni2017conservative}). We thus require that:
\begin{empheq}[box=\fbox]{align}
  n = \Omega \left(  \frac{\Delta^3 \wh{\Gamma}}{c_{\min} \epsilon^2} \log\left( \frac{\Delta S A}{c_{\min} \epsilon \delta}\right) \right).
  \tag{$\gamma$}
  \label{n_3}
\end{empheq}
We now proceed in bounding \ding{202}. To do so, we split the time into \textit{intervals}, similar to \citet{cohen2020near}. The first interval begins at the first time step, and each interval ends when its total cost accumulates to at least $\Delta$ (or when the goal state $g$ is reached). Denote by $t_m$ the time step at the beginning of the $m$-th interval, and by $H_m$ the length of the $m$-th interval. An important property is that $H_m \leq 2 \Delta / c_{\min}$. Denote by $\mathbb{I}_m$ the boolean equal to 1 if the goal $g$ is not reached by the end of the $m$-th interval, and denote by $s_{(m)}$ the state at the end of the $m$-th interval. Note that~$\mathbb{I}_m = 1 \iff s_{(m)} \neq g$, implying that $\mathbb{E}_{\wt{\pi},p}\left[\mathbb{I}_m \right] = \mathbb{P}_{\wt{\pi},p}(s_{(m)} \neq g)$. We introduce a change of variable in the sums, from the time index $t$ to the interval index $m$. Formally, for any time index~$t$, there exists an interval $m+1$ (during which it occurs) and an integer $h \in [H_{m+1}]$ such that $t = \sum_{i=0}^{m} H_i + h$. The change of variable yields the following
\begin{align}
    \sum_{t=0}^{+ \infty} \mathbb{E}_{\wt{\pi},p} \left[ \mathds{1}_{s_t \neq g} \sqrt{\mathbb{V}(s_t)} \right] &\leq \sum_{m=0}^{+ \infty} \mathbb{E}_{\wt{\pi},p} \left[ \mathbb{I}_m \sum_{h = t_{m+1}}^{t_{m+1} + H_{m+1}} \sqrt{\mathbb{V}(s_h)} \right] \nonumber \\
    &\ineqcs \sum_{m=0}^{+ \infty} \mathbb{P}_{\wt{\pi},p}(s_{(m)} \neq g) \sqrt{ \mathbb{E}_{\wt{\pi},p} \left[ \left( \sum_{h = t_{m+1}}^{t_{m+1} + H_{m+1}} \sqrt{\mathbb{V}(s_h)} \right)^2 \right] } \nonumber \\
    &\ineqcs \sum_{m=0}^{+ \infty} \mathbb{P}_{\wt{\pi},p}(s_{(m)} \neq g) \sqrt{ \mathbb{E}_{\wt{\pi},p} \left[ H_{m+1} \sum_{h = t_{m+1}}^{t_{m+1} + H_{m+1}} \mathbb{V}(s_h) \right] } \nonumber \\
    &\leq \sqrt{\frac{2\Delta}{c_{\min}}} \sum_{m=0}^{+ \infty} \mathbb{P}_{\wt{\pi},p}(s_{(m)} \neq g) \sqrt{ \mathbb{E}_{\wt{\pi},p} \left[ \sum_{h = t_{m+1}}^{t_{m+1} + H_{m+1}} \mathbb{V}(s_h) \right] }. \label{eq0}
\end{align}
To bound the expression above, we first use the property shown by \citet[][Lem.\,4.7]{cohen2020near} that whenever every state-action pair has been sampled sufficiently many times (specifically, at least $\alpha B_{\star} S c_{\min}^{-1} \log( B_{\star} S A c_{\min}^{-1} \delta^{-1})$ times for some constant $\alpha > 0$, which is the case here), then the expected variance $\mathbb{V}$ accumulated over a whole interval $m$ can be bounded as $O(B_{\star}^2)$, i.e., there exists an absolute constant $c_3 > 0$ such that
\begin{align}\label{eq1}
    \mathbb{E}_{\wt{\pi},p} \left[ \sum_{h = t_{m+1}}^{t_{m+1} + H_{m+1}} \mathbb{V}(s_h) \right] \leq c_3 \Delta^2.
\end{align}
Second, we bound the series of the probabilities. The construction of the intervals entails that if the $m$-th interval does not end in the goal state, then the cumulative cost to reach the goal state is more than $\Delta m$. Furthermore, the probability of the latter event can be bounded by Lem.\,\ref{lemma_quantify_goal_reaching_probability} leveraging the component-wise inequality $V \leq 2\Delta$. As a result, we get
\begin{align*}
    \mathbb{P}_{\wt{\pi},p}(s_{(m)} \neq g) \leq  2 \exp\left( - \frac{\Delta m}{8 \Delta}\right) = 2 \exp\left(-\frac{1}{8}\right)^m,
\end{align*}
which implies that
\begin{align}\label{eq2}
    \sum_{m=0}^{+ \infty} \mathbb{P}_{\wt{\pi},p}(s_{(m)} \neq g) \leq \frac{2}{1 - \exp(-\frac{1}{8})}.
\end{align}
Plugging Eq.\,\ref{eq1} and \ref{eq2} into Eq.\,\ref{eq0} gives
\begin{align*}
    \text{\ding{202}} \leq 25 c_1 \sqrt{c_3} \sqrt{\frac{\Delta^{3/2} \wh{\Gamma} \log(S'A \delta^{-1} n)}{c_{\min} n}}.
\end{align*}
We seek to ensure that $\text{\ding{202}} \leq \frac{2\epsilon}{9}$. There simply remains to invert the inequality above to extract $n$ and do some algebraic manipulations (see e.g., \citealp[][Lem.\,9]{kazerouni2017conservative}). We thus require (once again) that:
\begin{empheq}[box=\fbox]{align}
  n = \Omega \left( \frac{\Delta^3 \wh{\Gamma}}{c_{\min} \epsilon^2} \log\left( \frac{\Delta S A }{c_{\min} \epsilon \delta} \right) \right).
\tag{$\gamma$}
\label{n_3}
\end{empheq}

\vspace{0.05in}%
\noindent Overall, combining the requirements of Eq.\,\eqref{n_1}, \eqref{n_2} and \eqref{n_3} means that we get the component-wise guarantee that $V \leq \wt{V} + \frac{2\epsilon}{3}$, and therefore from Lem.\,\ref{useful_lemma_bound_vtilde_delta} that $V \leq V^{\star} + \epsilon$, as soon as:
\begin{empheq}[box=\fbox]{align*}
    n = \Omega \left(  \frac{\Delta^3 \wh{\Gamma}}{c_{\min} \epsilon^2} \log\left( \frac{\Delta S A }{c_{\min} \epsilon \delta} \right) + \frac{\Delta^2 S}{c_{\min} \epsilon} \log\left( \frac{\Delta S A }{c_{\min} \epsilon \delta} \right) + \frac{\Delta^2 \wh{\Gamma}}{c_{\min}^2} \log^2\left( \frac{\Delta S A }{c_{\min} \delta} \right) \right).
\end{empheq}

\vspace{0.1in}

\section{Proof of Thm.\,\ref{theorem_theta}}\label{app_proof_theta_thm}

Here we provide the proof of Thm.\,\ref{theorem_theta} by establishing that the output policy $\wt{\pi}$ of Alg.\,\ref{algo_2} is $\epsilon$-optimal w.r.t.\,the restricted set $\Pi_{\theta}$. We assume that the event $\mathcal{E}$ holds. Here we offset all the costs with the additive perturbation $\nu = \frac{\epsilon}{2\theta \wh D}$. We use the subscript $\nu$ to denote quantities considered in the \textit{perturbed model}. In the perturbed model, the costs are set to $c'_{\nu}(s,a) := \max\{c(s,a), \nu\}$, which in particular implies that the minimum cost verifies $\min_{s,a} c'_{\nu}(s,a) \geq \nu$. 

The application of Thm.\,\ref{theorem_cmin} in the perturbed model immediately yields the component-wise inequality $V_{\nu} \leq V_{\nu}^{\star} + \frac{\epsilon}{2}$. Moreover, let $\pi^{\S} \in \min_{\pi \in \Pi_{\theta}} V^{\pi}$, $V^{\S} := V^{\pi^{\S}}$ and $T^{\S} := \mathbb{E}\left[ \tau_{\pi^{\S}} \right]$. In particular, we have $V^{\star}_{\nu} \leq V^{\S}_{\nu}$ and $T^{\S}(s) \leq \theta D_{s} \leq \theta \wh D$. Furthermore, given the choice of $\nu$ and the fact that $c'_{\nu}(s,a) \leq c(s,a) + \nu$, we have $V^{\S}_\nu \leq V^{\S} + \nu T^{\S} \leq V^{\S} + \frac{\epsilon}{2}$. Lastly, we have $V \leq V_{\nu}$. Putting everything together yields the sought-after inequality $V \leq V^{\S} + \epsilon$.

\end{document}